\newcommand{\feat}{\ensuremath{\boldsymbol\phi}}
\newcommand{\w}{\ensuremath{\mathbf{w}}}
\newcommand{\wopt}{\ensuremath{\mathbf{w}^*}}
\newtheorem{theorem}{Theorem}
\newtheorem{proposition}{Proposition}
\newcommand{\Rmnum}[1]{\expandafter\@slowromancap\romannumeral #1@}
\DeclareMathOperator*{\argmin}{arg\,min}
\DeclareMathOperator*{\argmax}{arg\,max}
\newcommand{\R}{\mathbb{R}}
\newcommand{\E}{\mathbb{E}}
\newcommand{\mdp}{\ensuremath{\mathcal{M}}}
\def \argmax {\mathop{\rm arg\,max}}
\def \argmin {\mathop{\rm arg\,min}}
\newcommand{\cF}{{\mathcal{F}}}
\DeclareMathOperator\diam{diam}
\DeclareMathOperator\pr{pr}
\title{Teaching Inverse Reinforcement Learners\\ via Features and Demonstrations}
\author{
    Luis Haug\\
    Department of Computer Science \\
    ETH Zurich\\
    \texttt{lhaug@inf.ethz.ch}
    \And
    Sebastian Tschiatschek\\
    Microsoft Research\\
    Cambridge, UK\\    
    \texttt{setschia@microsoft.com}
    \And
    Adish Singla\\
    Max Planck Institute for Software Systems\\% (MPI-SWS)\\
    Saarbr\"ucken, Germany\\
    \texttt{adishs@mpi-sws.org}
  %   David S.~Hippocampus\thanks{Use footnote for providing further
  %   information about author (webpage, alternative
  %   address)---\emph{not} for acknowledging funding agencies.} \\
  % Department of Computer Science\\
  % Cranberry-Lemon University\\
  % Pittsburgh, PA 15213 \\
  % \texttt{hippo@cs.cranberry-lemon.edu} \\
  %% examples of more authors
  %% \And
  %% Coauthor \\
  %% Affiliation \\
  %% Address \\
  %% \texttt{email} \\
  %% \AND
  %% Coauthor \\
  %% Affiliation \\
  %% Address \\
  %% \texttt{email} \\
  %% \And
  %% Coauthor \\
  %% Affiliation \\
  %% Address \\
  %% \texttt{email} \\
  %% \And
  %% Coauthor \\
  %% Affiliation \\
  %% Address \\
  %% \texttt{email} \\
}
\begin{document}
\maketitle

% %%%%%%%%%%%%%%%%%%%%%%%%%%%%%%%%%%%%%%%%%%%%%%%%%%%%%%%%% ABSTRACT
\begin{abstract}
    % \luis{Update. Must make clear that the problem we consider is
    %     really that of feature matching under worldview
    %     mismatch... whether this is done using IRL or not is not
    %     really important for us. Should use apprenticeship learning or
    %     LfD. We refer to our learner as an IRLearner because matching
    %     feature expectations by estimating a reward and then training
    %     an optimal policy is one way of doing this...}
    
    Learning near-optimal behaviour from an expert's demonstrations
    typically relies on the assumption that the learner knows the
    features that the true reward function depends on. In this paper,
    we study the problem of learning from demonstrations in the
    setting where this is \emph{not} the case, i.e., where there is a
    mismatch between the worldviews of the learner and the expert. We
    introduce a natural quantity, the \emph{teaching risk}, which
    measures the potential suboptimality of policies that look optimal
    to the learner in this setting. We show that bounds on the
    teaching risk guarantee that the learner is able to find a
    near-optimal policy using standard algorithms based on inverse
    reinforcement learning. Based on these findings, we suggest a
    teaching scheme in which the expert can decrease the teaching risk
    by updating the learner's worldview, and thus ultimately enable
    her to find a near-optimal policy.
\end{abstract}

% Learning from demonstrations 
%%% Local Variables:
%%% mode: latex
%%% TeX-master: "main"
%%% End:

%%%%%%%%%%%%%%%%%%%%%%%%%%%%%%%%%%%%%%%%%%%%%%%%%%%%%%%%%% INTRODUCTION
%!TEX root = main.tex
%%%%%%%%%%%%%%%%%%%%%%%%%%%%%%%%%%%%%%%%%%%%%%%%%%%%%%%%%
\section{Introduction}\label{sec:intro}
Reinforcement learning has recently led to impressive and widely
recognized results in several challenging application domains,
including game-play, e.g., of classical games (Go) and Atari games.
In these applications, a clearly defined reward function, i.e.,
whether a game is won or lost in the case of Go or the number of
achieved points in the case of Atari games, is optimized by a
reinforcement learning agent interacting with the environment.

However, in many applications it is very difficult to specify a reward
function that captures all important aspects.  For instance, in an
autonomous driving application, the reward function of an autonomous
vehicle should capture many different desiderata, including the time
to reach a specified goal, safe driving characteristics, etc. In such
situations, learning from demonstrations can be a remedy, transforming
the need of specifying the reward function to the task of providing an
expert's demonstrations of desired behaviour; we will refer to this
expert as the \emph{teacher}. Based on these demonstrations, a
learning agent, or simply \emph{learner} attempts to infer a
(stochastic) policy that approximates the feature counts of the
teacher's demonstrations. Examples for algorithms that can be used to
that end are those in \cite{Abbeel-Ng--ICML04} and \cite{ziebart2008maximum},
which use inverse reinforcement learning (IRL) to estimate a reward
function for which the demonstrated behaviour is optimal, and then
derive a policy based on that.

% \sout{To be successful, i.e., for the learner to find a policy that
%     achieves good performance with respect to the teacher's (unknown)
%     metric, the features considered by the teacher and by the learner
%     have to match. However, as we argue, this match of features
%     considered for the reward function does not necessarily hold in
%     many real-world applications.} 
For this strategy to be successful, i.e., for the learner to find a
policy that achieves good performance with respect to the reward
function set by the teacher, the learner has to know what features the
teacher considers and the reward function depends on. However, as we
argue, this assumption does not hold in many real-world
applications. For instance, in the autonomous driving application, the
teacher, e.g., a human driver, might consider very different features,
including high-level semantic features, while the learner, i.e., the
autonomous car, only has sensory inputs in the form of distance
sensors and cameras providing low-level semantic features.  In such a
case, there is a mismatch between the teacher's and the learner's
features which can lead to degraded performance and unexpected
behaviour of the learner.

In this paper we investigate exactly this setting. We assume that the
true reward function is a linear combination of a set of features
known to the teacher. The learner also assumes that the reward
function is linear, but in features which are different from the truly
relevant ones; e.g., the learner could only observe a subset of those
features.  In this setting, we study the potential decrease in
performance of the learner as a function of the learner's worldview.
We introduce a natural and easily computable quantity, the
\emph{teaching risk}, which bounds the maximum possible performance
gap of the teacher and the learner.

We continue our investigation by considering a teaching scenario in
which the teacher can provide additional features to the learner,
e.g., add additional sensors to an autonomous vehicle.  This naturally
raises the question which features should be provided to the learner
to maximize her performance. To this end, we propose an algorithm
that greedily minimizes the teaching risk, thereby shrinking the
maximal gap in performance that policies optimized with respect to the
learner's resp.\ teacher's worldview can have.

% , enabling \emph{robust teaching}.  \todo{Is this accurate?}
% [\textbf{TODO} I guess we rather leave this term out]

Or main contributions are:
\begin{enumerate}
\item We formalize the problem of worldview mismatch for reward
    computation and policy optimization based on demonstrations.
\item We introduce the concept of \emph{teaching risk}, bounding the
    maximal performance gap of the teacher and the learner as a
    function of the learner's worldview and the true reward function.
\item We formally analyze the teaching risk and its properties, giving
    rise to an algorithm for teaching a learner with an
    incomplete worldview.
\item We substantiate our findings in a large set of
    experiments.
\end{enumerate}

%%%%%%%%%%%%%%%%%%%%%%%%%%%%%%%%%%%%%%%%%%%%%%%

%%%%%%%%%%%%%%%%%%%%%%%%%%%%%%%%%%%%%%%%%%%%%%%%
% \begin{figure}[ht]
%     \centering
%     \includegraphics[height=0.6\textwidth]{}
% \caption{caption}
% \label{fig:intro.blankaxis}
%\end{figure}
%%%%%%%%%%%%%%%%%%%%%%%%%%%%%%%%%%%%%%%%%%%%%%%%

%%% Local Variables:
%%% mode: latex
%%% TeX-master: "main"
%%% End:

%%%%%%%%%%%%%%%%%%%%%%%%%%%%%%%%%%%%%%%%%%%%%%%%%%%%%%%%%% Related Work
%!TEX root = main.tex
%%%%%%%%%%%%%%%%%%%%%%%%%%%%%%%%%%%%%%%%%%%%%%%%%%%%%%%%%
\section{Related Work}\label{sec:relatedwork}
%%%%%%%%%%%%%%%%%%%%%%%%%%%%%%%%%%%%%%%%%%%%%%%
Our work is related to the area of algorithmic machine teaching, where
the objective is to design effective teaching algorithms to improve
the learning process of a
learner~\cite{zhu2018overview,zhu2015machine}. Machine teaching has
recently been studied in the context of diverse real-world
applications such as personalized education and intelligent tutoring
systems~\cite{hunziker2018teachingmultiple,rafferty2016faster,patil2014optimal},
social robotics \cite{cakmak2014eliciting}, adversarial machine
learning~\cite{mei2015using}, program
synthesis~\cite{mayer2017proactive}, and human-in-the-loop
crowdsourcing systems~\cite{singla2014near,singla2013actively}.
However, different from ours, most of the current work in machine
teaching is limited to supervised learning settings, and to a setting
where the teacher has full knowledge about the learner's model.

%%%%%%%%%%%%%%%%
Going beyond supervised learning,
\cite{cakmak2012algorithmic,danielbrown2018irl,kamalaruban2018assisted} have studied the
problem of teaching an IRL agent, similar in spirit to what we do in
our work. Our work differs from their work in several
aspects---they assume that
the teacher has full knowledge of the learner's feature space, and
then provides a near-optimal set/sequence of demonstrations;
we consider a more realistic setting where there is a mismatch between
the teacher's and the learner's feature space. Furthermore, in our
setting, the teaching signal is a mixture of demonstrations and
features.
% in a batch at once

%%%%%%%%%%%%%%%%
Our work is also related to teaching via explanations and features as
explored recently by \cite{macaodha18teaching} in a supervised
learning setting. However, we explore the space of teaching by
explanations when teaching an IRL agent, which makes it technically
very different from \cite{macaodha18teaching}. Another important
aspect of our teaching algorithm is that it is adaptive in nature, in
the sense that the next teaching signal accounts for the current
performance of the learner (i.e., worldview in our setting). Recent
work of \cite{chen2018understanding,liu2017iterative,yeo2019iterative} have studied
adaptive teaching algorithms, however only in a supervised learning
setting.

%%%%%%%%%%%%%%%%
Apart from machine teaching, our work is related to
\cite{stadie2017-3rd-person} and \cite{sermanet2017-time-contrastive},
which also study imitation learning problems in which the teacher and
the learner view the world differently. However, these two works are
technically very different from ours, as we consider the problem of
providing teaching signals under worldview mismatch from the
perspective of the teacher.

%%% Local Variables:
%%% mode: latex
%%% TeX-master: "main"
%%% End:

%%%%%%%%%%%%%%%%%%%%%%%%%%%%%%%%%%%%%%%%%%%%%%%%%%%%%%%%%% MODEL
% !TEX root = main.tex
%%%%%%%%%%%%%%%%%%%%%%%%%%%%%%%%%%%%%%%%%%%%%%%%%%%%%%%%% 
\section{The Model}
\label{sec:model}

% \subsection{Basic definitions}
% \label{sec:definitions}

\paragraph{Basic definitions.} Our environment is described by a
\emph{Markov decision process} $\mdp = (S, A, T, D, R, \gamma)$, where
$S$ is a finite set of states, $A$ is a finite set of available
actions, $T$ is a family of distributions on $S$ indexed by
$S \times A$ with $T_{s, a}(s')$ describing the probability of
transitioning from state $s$ to state $s'$ when action $a$ is taken,
$D$ is the initial-state distribution on $S$ describing the
probability of starting in a given state, $R\colon S \rightarrow \R$ is a
reward function and $\gamma \in (0,1)$ is a discount factor. We assume
that there exists a feature map $\feat\colon S \to \R^k$ such that the
reward function is linear in the features given by $\feat$, i.e.,
\begin{equation*}
    R(s) = \langle \wopt, \feat(s) \rangle
\end{equation*}
for some $\wopt \in \R^k$ which we assume to satisfy $\Vert \wopt
\Vert = 1$. % We assume that $\feat$ takes values in a
% bounded subset $B \subset \R^k$ (say $B = [-1,1]^k$).

% Policies and expected rewards in terms of feature expectations
By a \emph{policy} we mean a family of distributions on $A$ indexed by
$S$, where $\pi_s(a)$ describes the probability of taking action $a$ in
state $s$. We denote by $\Pi$ the set of all such policies. The
performance measure for policies we are interested in is the
\emph{expected discounted reward}
$R(\pi) := \E \left(\sum_{t=0}^\infty \gamma^t R(s_t)\right)$,
where the expectation is taken with respect to the distribution over
trajectories $(s_0, s_1, \dots)$ induced by $\pi$ together with the
transition probabilities $T$ and the initial-state distribution
$D$. We call a policy $\pi$ \emph{optimal} for the reward function
$R$ if $\pi \in \argmax_{\pi' \in \Pi} R(\pi')$. Note that
\begin{equation*}
    R(\pi) = \langle \wopt, \mu(\pi) \rangle,
\end{equation*}
where $\mu\colon \Pi \to \R^k$,
$\pi \mapsto \E \left( \sum_{t=0}^\infty \gamma^t \feat(s_t)\right)$,
is the map taking a policy to its vector of \emph{(discounted) feature
    expectations}. Note also that the image $\mu(\Pi)$ of this map is
a bounded subset of $\R^k$ due to the finiteness of $S$ and the
presence of the discounting factor $\gamma \in (0,1)$; we denote by
$\diam \mu(\Pi) = \sup_{\mu_0, \mu_1 \in \mu(\Pi)} \Vert \mu_0 - \mu_1
\Vert$ its diameter. Here and in what follows, $\Vert \cdot \Vert$
denotes the Euclidean norm.

% \subsection{Problem formulation}
% \label{sec:problem-formulation}

\paragraph{Problem formulation.} We consider a learner $L$
and a teacher $T$, whose ultimate objective is that $L$ finds a
near-optimal policy $\pi^L$ with the help of
$T$. % [\textbf{TODO} as opposed to just making $L$ as good as possible
% within her limits; really ``joint''?]

% worldview idea
The challenge we address in this paper is that of achieving this
objective under the assumption that there is a \emph{mismatch between
    the worldviews of $L$ and $T$}, by which we mean the following:
Instead of the ``true'' feature vectors $\feat(s)$, $L$ observes
feature vectors $A^L \feat(s) \in \R^{\ell}$, where
\begin{equation*}
    A^L\colon \R^k \to \R^\ell
\end{equation*}
is a linear map (i.e., a matrix) that we interpret as $L$'s
worldview. The simplest case is that $A^L$ selects a subset of the
features given by $\feat(s)$, thus modelling the situation where $L$
only has access to a subset of the features relevant for the true
reward, which is a reasonable assumption for many real-world
situations. More generally, $A^L$ could encode different weightings of
those features. % [\textbf{TODO} Examples of what we have in mind]

The question we ask is whether and how $T$ can provide demonstrations
or perform other teaching interventions, in a way such as to make sure
that $L$ achieves the goal of finding a policy with near-optimal
performance. %, or at least gets as close to that goal as possible.

% \subsection{Assumptions on learner}
% \label{sec:assumptions-on-learner}
\paragraph{Assumptions on the teacher and on the learner.}
We assume that $T$ knows the full specification of the MDP as well as
$L$'s worldview $A^L$, and that she can help $L$ to learn in two
different ways:
\begin{enumerate}
\item By providing $L$ with demonstrations of behaviour in the MDP;
\item By updating $L$'s worldview $A^L$.
\end{enumerate}

Demonstrations can be provided in the form of trajectories sampled
from a (not necessarily optimal) policy $\pi^T$, or in the form of
(discounted) feature expectations of such a policy. The method by which $T$ can
update $A^L$ will be discussed in Section \ref{sec.algorithms}. Based
on $T$'s instructions, $L$ then attemps to train a policy $\pi^L$
whose feature expectations approximate those of $\pi^T$. Note that, if
this is successful, the performance of $\pi^L$ is close to that of
$\pi^T$ due to the form of the reward function.

We assume that $L$ has access to an algorithm that enables her to do
the following: Whenever she is given sufficiently many demonstrations
sampled from a policy $\pi^T$, she is able to find a policy $\pi^L$
whose feature expectations in \emph{her} worldview approximate those
of $\pi^T$, i.e., $A^L \mu(\pi^L) \approx A^L \mu(\pi^T)$. Examples
for algorithms that $L$ could use to that end are the algorithms in
\cite{Abbeel-Ng--ICML04} and \cite{ziebart2008maximum} which are based
on IRL. The following discussion does not require any further
specification of what precise algorithm $L$ uses in order to match
feature expectations.

%which is near-optimal for a reward function linear in the features she can observe, 

%%%%%%%%%%%%%%%%%%%%%%%%%%%%%%%%%%%%%%%%%%%%%%%%%%%%%%%%%%%%%%%%%%%%
\begin{figure}[t]
    \begin{subfigure}[t]{0.49\textwidth}
        \includegraphics[width=1\textwidth]{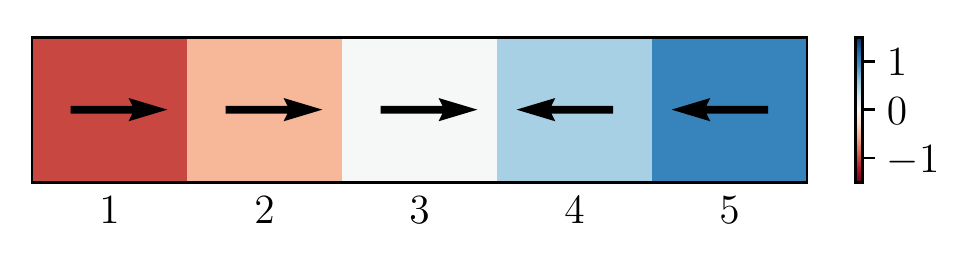}
        \subcaption{$\pi_0$}
        \label{fig:1d-policies-policy0}
    \end{subfigure}
    \begin{subfigure}[t]{0.49\textwidth}
        \includegraphics[width=1\textwidth]{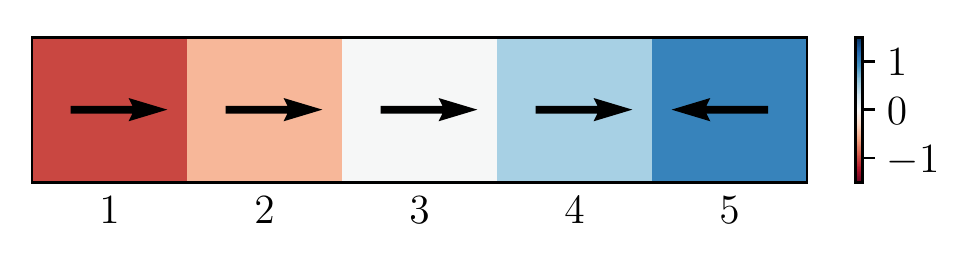}
        \subcaption{$\pi^*$}
        \label{fig:1d-policies-optimal-policy}
    \end{subfigure}
    \caption{A simple example to illustrate the challenges arising
        when teaching under worldview mismatch. We consider an MDP in
        which $S = \{s_1, \dots, s_5\}$ is the set of cells in the
        gridworld displayed, $A = \{\leftarrow, \rightarrow\}$, and
        $R(s) = \langle \wopt, \feat(s) \rangle$ with feature map
        $\feat: S \to \R^{5}$ taking $s_i$ to the one-hot vector
        $e_i \in \R^{5}$. The initial state distribution is uniform
        and the transition dynamics are deterministic. More
        specifically, when the agent takes action $\rightarrow$
        (resp. $\leftarrow$), it moves to the neighboring cell to the
        right (resp. left); when the agent is in the rightmost
        (resp. leftmost) cell, the action $\rightarrow$
        (resp. $\leftarrow$) is not permitted. The reward weights are
        given by $\wopt = (-1, -0.5, 0, 0.5, 1)^T \in \R^5$ up to
        normalization; the values
        $R(s_i) = \langle \wopt, \feat(s_i)\rangle$ are also encoded
        by the colors of the cells. The policy $\pi^*$ in
        (\subref{fig:1d-policies-optimal-policy}) is the optimal
        policy with respect to the true reward function. Assuming that
        the learner $L$ only observes the feature corresponding to the
        central cell, i.e.,
        $A^L = (0 \quad 0 \quad 1 \quad 0 \quad 0) \in \R^{1 \times
            5}$, the policy $\pi_0$ in
        (\subref{fig:1d-policies-policy0}) is a better teaching policy
        in the worst-case sense. See the main text for a detailed
        description.}
    \label{fig:1d-policies}
\end{figure}

% \subsection{Basic  teaching strategy}
% \label{sec:assumpt-teach-learn}
\paragraph{Challenges when teaching under worldview mismatch.} 
If there was no mismatch in the worldview (i.e., if $A^L$ was the
identity matrix in $\R^{k \times k}$), then the teacher could simply
provide demonstrations from the optimal policy $\pi^*$ to achieve the
desired objective. However, the example in Figure~\ref{fig:1d-policies}
illustrates that this is not the case when there is a mismatch between
the worldviews.

For the MDP in Figure~\ref{fig:1d-policies}, assume that the teacher
provides demonstrations using $\pi^T = \pi^*$, which moves to the
rightmost cell as quickly as possible and then alternates between
cells 4 and 5 (see Figure~\ref{fig:1d-policies-optimal-policy}). Note
that the policy $\widetilde \pi^*$ which moves to the leftmost cell as
quickly as possible and then alternates between cells 1 and 2, has the
same feature expectations as $\pi^*$ in the learner's worldview; in
fact, $\widetilde \pi^*$ is the unique policy other than $\pi^*$ with
that property (provided we restrict to deterministic policies). As the
teacher is unaware of the internal workings of the learner, she has no
control over which of these two policies the learner will eventually
learn by matching feature expectations.
% A learner can mimic the corresponding feature expectations by either
% (a) using the optimal policy or (b) moving to the leftmost state as
% quickly as possible and then continuously transiting between states
% 1 and 2 (i.e., the optimal policy with respect to $-\wopt$).

However, the teacher can ensure that the learner achieves better
performance in a worst case sense by providing demonstrations tailored
to the problem at hand. In particular, assume that the teacher uses
$\pi^T = \pi_0$, the policy shown in
Figure~\ref{fig:1d-policies-policy0}, which moves to the central cell
as quickly as possible and then alternates between cells 3 and 4. The
only other policy $\widetilde \pi_0$ with which the learner could
match the feature expectations of $\pi_0$ in her worldview
(restricting again to deterministic policies) is the one that moves to
the central cell as quickly as possible and then alternates between
states 2 and 3.

Note that
$R(\pi^*) > R(\pi_0) > R(\widetilde \pi_0) > R(\widetilde \pi^*)$, and
hence $\pi_0$ is a better teaching policy than $\pi^*$ regarding the
performance that a learner matching feature expectations in her
worldview achieves in the worst case. In particular, this example
shows that providing demonstrations from the truly optimal policy
$\pi^*$ does not guarantee that the learner's policy achieves good
performance in general.

\section{Teaching Risk}
\label{sec:teaching-risk}

\paragraph{Definition of teaching risk.} 
The fundamental problem in the setting described in Section \ref{sec:model} is that two policies
$\pi_0, \pi_1$ that perform equally well with respect to any estimate
$s \mapsto \langle \w^L, A^L \feat(s) \rangle$ that $L$ may have of
the reward function, may perform very differently with respect to the
true reward function. Hence, even if $L$ is able to imitate the
behaviour of the teacher well in \emph{her} worldview, there is
genenerally no guarantee on how good her performance is with respect
to the true reward function. For an illustration, see Figure
\ref{fig:1d-policies-teaching-risk}.

% , where
% we assume as in the example above that $L$ can only observe the
% feature corresponding to the central grid cell. If $L$ assumes the
% reward $\w \in \R$ to be positive, then both policies shown in the figure
% perform equally well in her view. However, one of them has positive
% expected reward, whereas the other one has negative expected reward
% with respect to the true reward function. \luis{Say that this shows
%     that the strategy above does not guarantee a good solution.} 

To address this problem, we define the following quantity: The
\textbf{teaching risk} for a given worldview $A^L$ with respect to
reward weights $\wopt$ is
\begin{equation}
    \label{defn:teaching-risk}
    \rho(A^L; \wopt) := \max_{v \in \ker A^L, \Vert v \Vert \leq 1}
    \langle \wopt, v \rangle.
\end{equation}
Here $\ker A^L = \{v \in \R^k ~|~A^L v = 0\}$ and
$\ker \wopt = \{v \in \R^k ~|~ \langle \wopt, v \rangle = 0\}$ denote
the kernels of $A^L$ resp.\ $\langle \wopt, \cdot
\rangle$. Geometrically, $\rho(A^L; \wopt)$ is the cosine of the angle
between $\ker A^L$ and $\wopt$; in other words, $\rho(A^L; \wopt)$
measures the degree to which $\ker A^L$ deviates from satisfying
$\ker A^L \subseteq \ker \wopt$. Yet another way of characterizing the
teaching risk is as $\rho(A^L; \wopt) = \Vert \pr(\wopt) \Vert$, where
$\pr: \R^k \to \ker A^L$ denotes the orthogonal projection onto
$\ker A^L$.

\paragraph{Significance of teaching risk.}
% in the following? No, rather make clear that the point is that
% $\pi^T$ does not
% have to be optimal. But the question is whether we have to bring
% this idea in at this point.}
To understand the significance of the teaching risk in our context,
assume that $L$ is able to find a policy $\pi^L$ which matches the
feature expectations of $T$'s (not necessarily optimal) policy $\pi^T$
perfectly in her worldview, which is equivalent to
$\mu(\pi^T) - \mu(\pi^L) \in \ker A^L$. Directly from the definition
of the teaching risk, we see that the gap between their performances
with respect to the \emph{true} reward function satisfies
\begin{align}
  \label{eq:tr-basic-estimate}
  \begin{split}
      % \vert R^*(\pi^T) - R^*(\pi^L) \vert &=
      \vert \langle \wopt,
      \mu(\pi^T) - \mu(\pi^L)
      \rangle \vert
      \leq \rho(A^L; \wopt) \cdot \Vert \mu(\pi^T) - \mu(\pi^L) \Vert,
  \end{split}
\end{align}
with equality if $\mu(\pi^T) - \mu(\pi^L)$ is proportional to a vector
$v$ realizing the maximum in \eqref{defn:teaching-risk}. If the
teaching risk is large, this performance gap can generally be large as
well. This motivates the interpretation of $\rho(A^L; \wopt)$ as a
measure of the risk when teaching the task modelled by an MDP with
reward weights $\wopt$ to a learner whose worldview is represented by
$A^L$.

\begin{figure}[t]
    \begin{subfigure}[]{0.49\textwidth}
        \includegraphics[width=1\textwidth]{fig/policy0.pdf}
        \subcaption{$\pi_0$}
        \label{fig:1d-policies-tr-policy0}
    \end{subfigure}
    \begin{subfigure}[]{0.49\textwidth}
        \includegraphics[width=1\textwidth]{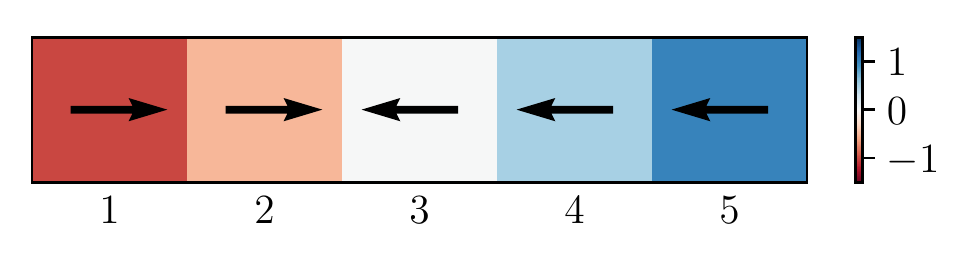}
        \subcaption{$\pi_1$}
        \label{fig:1d-policies-tr-policy1}
    \end{subfigure}
    \caption{Two policies in the environment introduced in Figure
        \ref{fig:1d-policies} (the policy $\pi_0$ here is identical to
        the one in Figure
        \ref{fig:1d-policies}(\subref{fig:1d-policies-policy0})). We
        assume again that $L$ can only observe the feature
        corresponding to the central cell. Provided that the initial
        state distribution is uniform, the feature expectations of
        $\pi_0$ and $\pi_1$ in $L$'s worldview are equal, and hence
        these policies perform equally well with respect to any estimate of the
        reward function that $L$  may have. In fact, both look optimal
        to $L$ if she assumes that the central cell carries positive
        reward. However, their performance with respect to the true
        reward function is positive for $\pi_0$ but negative for
        $\pi_1$. This illustrates that, if all we know about $L$ is
        that she matches feature counts in her worldview, we can
        generally not give good performance guarantees for the policy
        she finds.}
    \label{fig:1d-policies-teaching-risk}
\end{figure}

% the true performance of $\pi^L$ can generally be far from the
% performance of $\pi^T$.  \luis{\textbf{TODO} Well, this is not
% really correct; we don't have a lower bound. We have
%     \begin{equation*}
%         \max_{\mu \in \mu(\Pi), \mu^* -
%             \mu \in \ker A^L} \langle \wopt, \frac{\mu^*-\mu}{\Vert
%             \mu^*-\mu\Vert} \rangle
%         = \rho(\wopt;A^L)
%     \end{equation*}
% }

% \paragraph{Performance guarantee under assumptions on teaching risk.}
On the other hand, smallness of the teaching risk implies that this
performance gap cannot be too large. 
%%%%%%%%%%%%%%%%%%%%%%5
The following theorem, proven in the Appendix, 
%The following theorem, proven in the extended version of this paper \cite{haug2018teachingrisk},
%%%%%%%%%%%%%%%%%%%%%%5
generalizes the bound in \eqref{eq:tr-basic-estimate} to the situation in which
$\pi^L$ only approximates the feature expectations of $\pi^T$.

\vspace{4mm}
\begin{theorem}
    \label{thm:guarantee-bounded-risk}
    Assume that
    $ \Vert A^L (\mu(\pi^T) - \mu(\pi^L)) \Vert < \varepsilon$.  Then
    the gap between the true performances of $\pi^L$ and $\pi^T$
    satisfies
    \begin{equation*}
        \label{ineq:guarantee-bounded-risk}
        \vert \langle \wopt, \mu(\pi^T) - \mu(\pi^L) \rangle \vert < \frac{\varepsilon}{\sigma(A^L)} +
        \rho(A^L; \wopt) \cdot \diam \mu(\Pi)
    \end{equation*}
    with $\sigma(A^L) = \min_{v \perp \ker A^L, \Vert v \Vert = 1} \Vert
    A^L v \Vert$.
\end{theorem}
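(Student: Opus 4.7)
The plan is to decompose the displacement vector $\Delta := \mu(\pi^T) - \mu(\pi^L) \in \R^k$ orthogonally with respect to the subspace $\ker A^L$, and to bound the contribution of each component to $\langle \wopt, \Delta \rangle$ separately. Write $\Delta = \Delta_\parallel + \Delta_\perp$, where $\Delta_\parallel \in \ker A^L$ is the orthogonal projection of $\Delta$ onto $\ker A^L$ and $\Delta_\perp$ lies in its orthogonal complement. Then by the triangle inequality,
\begin{equation*}
    |\langle \wopt, \Delta \rangle| \leq |\langle \wopt, \Delta_\parallel \rangle| + |\langle \wopt, \Delta_\perp \rangle|,
\end{equation*}
and I would bound the two terms on the right separately, one by the teaching risk and one by $\varepsilon/\sigma(A^L)$.

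For the component $\Delta_\parallel$ inside $\ker A^L$, the definition of teaching risk gives directly
\begin{equation*}
    |\langle \wopt, \Delta_\parallel \rangle| \leq \rho(A^L; \wopt) \cdot \|\Delta_\parallel\|,
\end{equation*}
since $\Delta_\parallel / \|\Delta_\parallel\|$ is a unit vector in $\ker A^L$ and $\rho$ is defined as the maximum of $\langle \wopt, v\rangle$ over such unit vectors. Then I would upper bound $\|\Delta_\parallel\|$ by $\|\Delta\|$ (orthogonal projections are non-expansive), and in turn by $\diam \mu(\Pi)$, since both $\mu(\pi^T)$ and $\mu(\pi^L)$ lie in $\mu(\Pi)$.

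For the component $\Delta_\perp$ orthogonal to $\ker A^L$, the key observation is that $A^L \Delta = A^L \Delta_\perp$, because $\Delta_\parallel \in \ker A^L$. Hence the hypothesis $\|A^L \Delta\| < \varepsilon$ yields $\|A^L \Delta_\perp\| < \varepsilon$, and by the defining property of $\sigma(A^L)$, applied to $v = \Delta_\perp/\|\Delta_\perp\|$, I obtain $\|\Delta_\perp\| < \varepsilon/\sigma(A^L)$. Cauchy--Schwarz together with $\|\wopt\| = 1$ then gives $|\langle \wopt, \Delta_\perp\rangle| \leq \|\Delta_\perp\| < \varepsilon/\sigma(A^L)$. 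Adding the two estimates yields the claimed bound.

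The argument is essentially an exercise in orthogonal decomposition and does not hide any serious obstacle. The only subtle point is making sure that $\sigma(A^L) > 0$, so that the division is legitimate: this holds because $\sigma(A^L)$ is the minimum of $\|A^L v\|$ over unit vectors in the orthogonal complement of $\ker A^L$, on which $A^L$ is by construction injective. It is also worth noting explicitly that the step $\|\Delta_\parallel\| \leq \diam \mu(\Pi)$ is what forces the appearance of the diameter in the second term, which is the price paid for working in the general (nonzero-error) regime rather than the exact matching regime of \eqref{eq:tr-basic-estimate}.
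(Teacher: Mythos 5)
Your proof is correct and is essentially identical to the paper's: the paper sets $v = \pr(\mu(\pi^L)-\mu(\pi^T))$ so that $-v$ is your $\Delta_\parallel$ and $v + \mu(\pi^T)-\mu(\pi^L)$ is your $\Delta_\perp$, and then bounds the two pieces exactly as you do, via the teaching risk and the diameter for the kernel component and via $\sigma(A^L)$ for the orthogonal component. The only cosmetic difference is that you phrase it explicitly as an orthogonal decomposition and note the (harmless) positivity of $\sigma(A^L)$, which the paper leaves implicit.
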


Theorem \ref{thm:guarantee-bounded-risk} shows the following: If $L$
imitates $T$'s behaviour well in her worldview (meaning that
$\varepsilon$ can be chosen small) and if the teaching risk
$\rho(A^L; \wopt)$ is sufficiently small, then $L$ will perform nearly
as well as $T$ with respect to the true reward. In particular, if
$T$'s policy is optimal, $\pi^T = \pi^*$, then $L$'s policy $\pi^L$ is
guaranteed to be near-optimal.

The quantity $\sigma(A^L)$ appearing in Theorem
\ref{thm:guarantee-bounded-risk} is a bound on the amount to which
$A^L$ distorts lengths of vectors in the orthogonal complement of
$\ker A^L$. Note that $\sigma(A^L)$ is independent of the teaching
risk, in the sense that one can change it, e.g., by rescaling
$A^L \to \alpha A^L$ by some $\alpha \in \R_+$, without changing the
teaching risk.

\paragraph{Teaching risk as obstruction to recognizing optimality.}

%%%%%%%%%%%%%%%%%%%%%%%%%%%%%%%%%%%%%%%%%%%%%%%%%%%%%%%%%%%
%\adish{Add second assumption of the learner...}
%which is near-optimal for a reward function linear in the features she can observe, 

% \luis{\sout{The way she does that is by first estimating the reward
%         function for which $\pi^T$ is optimal and then training
%         herself and optimal policy with respect to that estimate.}}

We now provide a second motivation for the consideration of the
teaching risk, by interpreting it as a quantity that measures the
degree to which truly optimal policies deviate from looking optimal to
$L$. We make the technical assumption that $\mu(\Pi)$ is the closure
of a bounded open set with smooth boundary $\partial \mu(\Pi)$ (this
will only be needed for the proofs). Our first observation is the
following:

% Our first observation is that positive teaching risk
% $\rho(\wopt; A^L)$ precludes that a truly optimal policy looks
% optimal to $L$:

\vspace{4mm}
\begin{proposition}
    \label{prop:teaching-risk-suboptimality} Let $\pi^*$ be a policy
    which is optimal for $s \mapsto \langle \wopt, \feat(s) \rangle$.
    If $\rho(A^L; \wopt) > 0$, then $\pi^*$ is suboptimal with respect
    to \emph{any} choice of reward function
    $s \mapsto \langle \w, A^L\feat(s)\rangle$ with
    $\w \in \R^{\ell}$.
\end{proposition}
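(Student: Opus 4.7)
The plan is to exploit the identity $\langle \w, A^L \feat(s) \rangle = \langle (A^L)^T \w, \feat(s) \rangle$, which recasts the learner-side optimality question as asking whether $\mu(\pi^*)$ can maximize the linear functional $\langle (A^L)^T \w, \cdot \rangle$ over $\mu(\Pi)$. In this way, both the true reward and every candidate learner reward become linear functionals on $\mu(\Pi)$, and the whole proposition reduces to a question about which supporting hyperplanes are available at the boundary point $\mu(\pi^*)$.

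The key step I would invoke is the smoothness assumption on $\partial\mu(\Pi)$: at every boundary point $p$ there is a unique outward unit normal $\nu(p)$, and a nonzero linear functional $\langle u, \cdot \rangle$ attains its maximum on $\mu(\Pi)$ at $p$ if and only if $u$ is a positive scalar multiple of $\nu(p)$. Since $\pi^*$ is optimal for the nonzero vector $\wopt$, I conclude $\mu(\pi^*) \in \partial \mu(\Pi)$ and $\wopt = \lambda \, \nu(\mu(\pi^*))$ for some $\lambda > 0$. Suppose, for contradiction, that there is some $\w \in \R^\ell$ with $(A^L)^T \w \neq 0$ for which $\pi^*$ is also optimal. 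The same uniqueness argument forces $(A^L)^T \w$ to be a positive multiple of $\nu(\mu(\pi^*))$, hence of $\wopt$, so $\wopt \in \mathrm{range}((A^L)^T)$. But standard linear algebra gives $\mathrm{range}((A^L)^T) = (\ker A^L)^\perp$, while the hypothesis $\rho(A^L; \wopt) > 0$ states precisely that the orthogonal projection of $\wopt$ onto $\ker A^L$ is nonzero, i.e., $\wopt \notin (\ker A^L)^\perp$ — a contradiction.

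The only remaining case is $(A^L)^T \w = 0$, in which the candidate reward is identically zero and every policy trivially attains the maximum; I would handle this by either tacitly excluding such degenerate $\w$ from the statement or by observing that the induced reward carries no preference information whatsoever. The main obstacle in the plan is the uniqueness-of-outward-normal step, which is exactly why the technical smoothness hypothesis on $\partial \mu(\Pi)$ has been imposed: without smoothness, $\mu(\pi^*)$ could be a vertex-like boundary point supporting an entire cone of non-parallel maximizing directions, so one could no longer conclude that $(A^L)^T \w$ must be proportional to $\wopt$. Given smoothness, the rest is a one-line linear algebra contradiction using the characterization $\rho(A^L; \wopt) = \Vert \mathrm{pr}(\wopt) \Vert$ already noted in the excerpt.
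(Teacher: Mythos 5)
Your proof is correct, but it takes a genuinely different route from the paper's. The paper works in the learner's feature space: it observes that $\rho(A^L;\wopt)>0$ means $\ker A^L$ is not tangent to $\partial\mu(\Pi)$ at $\mu(\pi^*)$ (since the tangent space there is $\ker\wopt$), so some translate $\mu(\pi^*)+v$ with $v\in\ker A^L$ lies in the interior of $\mu(\Pi)$; a small ball around that point then shows $A^L\mu(\pi^*)=A^L(\mu(\pi^*)+v)$ is an \emph{interior} point of $A^L\mu(\Pi)$, so no linear functional on $\R^\ell$ can be maximized there. You instead pull each candidate learner reward back to $\R^k$ via $(A^L)^T$ and combine the uniqueness of the outward normal at a smooth boundary point with the duality $\mathrm{range}((A^L)^T)=(\ker A^L)^\perp$ and the identity $\rho(A^L;\wopt)=\Vert\pr(\wopt)\Vert$. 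Both arguments lean on the same smoothness hypothesis in the same place; yours is more self-contained linear algebra and, unlike the paper, explicitly disposes of the degenerate case $(A^L)^T\w=0$, where the induced reward is identically zero and ``suboptimality'' is vacuous --- an edge case the paper's proof silently shares. The paper's version buys the slightly stronger geometric conclusion that $A^L\mu(\pi^*)$ is interior to $A^L\mu(\Pi)$, which is what its appendix figure illustrates. One small caution: you phrase the normal-direction characterization as an ``if and only if''; the ``if'' direction would additionally require convexity of $\mu(\Pi)$ (true for feature expectations of stochastic policies, but not among the stated assumptions), whereas your argument only ever invokes the ``only if'' (necessary first-order) direction, so nothing actually breaks.
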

In view of Proposition
\ref{prop:teaching-risk-suboptimality}, a natural question is whether
we can bound the suboptimality, in $L$'s view, of a truly optimal
policy in terms of the teaching risk. The following theorem provides
such a bound:

\vspace{4mm}
\begin{theorem}
    \label{thm:guarantee-bounded-suboptimality}
    Let $\pi^*$ be a policy which is optimal for
    $s \mapsto \langle \wopt, \feat(s) \rangle$. There exists a unit
    vector $\wopt_L \in \R^\ell$ such that
    \begin{equation*}
        \label{ineq:guarantee-bounded-suboptimality}
        \max_{\mu \in \mu(\Pi)} \left \langle \wopt_L, A^L \mu \right
        \rangle - \left \langle
            \wopt_L , A^L \mu(\pi^*) \right \rangle \leq 
        \frac{\diam \mu(\Pi) \cdot \Vert A^L \Vert \cdot \rho(A^L; \wopt)}{\sqrt{1 - \rho(A^L; \wopt)^2}},
    \end{equation*}
    where $\Vert A^L \Vert = \max_{v \in \R^k, \Vert v \Vert = 1}
    \Vert A^L v \Vert$.
\end{theorem}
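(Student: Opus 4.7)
The plan is to construct the candidate weight vector $\wopt_L$ explicitly via an orthogonal decomposition of $\wopt$ relative to $\ker A^L$, and then use the true optimality of $\pi^*$ to transfer a one-sided bound from the true feature space into the learner's feature space. Concretely, I would first decompose
\[
\wopt = \wopt_{\parallel} + \wopt_{\perp},
\]
where $\wopt_{\parallel}$ is the orthogonal projection of $\wopt$ onto $\ker A^L$ and $\wopt_{\perp}$ its projection onto $(\ker A^L)^{\perp}$. By the characterization of the teaching risk as $\rho(A^L;\wopt) = \Vert \pr(\wopt) \Vert$, we have $\Vert \wopt_{\parallel} \Vert = \rho(A^L;\wopt)$ and $\Vert \wopt_{\perp} \Vert = \sqrt{1 - \rho(A^L;\wopt)^2}$. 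The case $\rho(A^L;\wopt) = 1$ is trivial since the right-hand side is then $+\infty$, so I assume $\rho(A^L;\wopt) < 1$, which ensures $\wopt_{\perp} \neq 0$.

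Next, since $(\ker A^L)^{\perp}$ equals the range of $(A^L)^T$, there exists a unit vector $\wopt_L \in \R^{\ell}$ and a scalar $\alpha > 0$ such that $(A^L)^T \wopt_L = \alpha \, \wopt_{\perp}/\Vert \wopt_{\perp} \Vert$. I would then estimate $\alpha = \Vert (A^L)^T \wopt_L \Vert \leq \Vert A^L \Vert$, using $\Vert \wopt_L \Vert = 1$ and the fact that the operator norm of $(A^L)^T$ equals that of $A^L$.

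With this $\wopt_L$ in hand, the remaining task is to bound the learner-side suboptimality. For any policy $\pi$, I would rewrite
\[
\langle \wopt_L, A^L(\mu(\pi) - \mu(\pi^*)) \rangle = \langle (A^L)^T \wopt_L, \mu(\pi) - \mu(\pi^*) \rangle = \frac{\alpha}{\Vert \wopt_{\perp} \Vert} \langle \wopt_{\perp}, \mu(\pi) - \mu(\pi^*) \rangle.
\]
Using $\wopt_{\perp} = \wopt - \wopt_{\parallel}$ together with the true optimality $\langle \wopt, \mu(\pi) - \mu(\pi^*) \rangle \leq 0$, this reduces to controlling $-\langle \wopt_{\parallel}, \mu(\pi) - \mu(\pi^*) \rangle$, which by Cauchy--Schwarz is at most $\Vert \wopt_{\parallel} \Vert \cdot \diam \mu(\Pi) = \rho(A^L;\wopt) \cdot \diam \mu(\Pi)$. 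Plugging in the bound on $\alpha$ and dividing by $\Vert \wopt_{\perp} \Vert = \sqrt{1 - \rho(A^L;\wopt)^2}$ yields the claimed inequality after taking the maximum over $\mu \in \mu(\Pi)$.

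The main obstacle I anticipate is the somewhat subtle bookkeeping around the scaling factor $\alpha$: one must verify that choosing $\wopt_L$ to be a unit vector in $\R^{\ell}$ only costs a factor of $\Vert A^L \Vert$ when pulled back to the true feature space, and that this pullback is exactly parallel to $\wopt_{\perp}$ so that the orthogonality argument using the optimality of $\pi^*$ is tight. The rest of the proof is essentially Cauchy--Schwarz combined with the geometric interpretation of the teaching risk developed in the main text.
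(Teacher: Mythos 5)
Your proof is correct and follows essentially the same route as the paper's: decompose $\wopt$ into its components in $\ker A^L$ and $(\ker A^L)^\perp$, use true optimality of $\pi^*$ together with Cauchy--Schwarz to bound $\langle \pr^\perp(\wopt), \mu - \mu(\pi^*)\rangle$ by $\rho(A^L;\wopt)\cdot\diam\mu(\Pi)$, and pull this back through $A^L$ at a cost of $\Vert A^L\Vert/\sqrt{1-\rho(A^L;\wopt)^2}$. The only cosmetic difference is that you construct $\wopt_L$ directly from the identity $(\ker A^L)^\perp = \mathrm{range}((A^L)^T)$, whereas the paper obtains the same vector (up to non-uniqueness when the rows of $A^L$ are dependent) as $\wopt_L \propto ((A^L)^+)^T\wopt$ via the Moore--Penrose pseudoinverse; your estimate $\alpha = \Vert (A^L)^T \wopt_L\Vert \le \Vert A^L\Vert$ is a slightly more direct way of getting the paper's bound $\Vert ((A^L)^+)^T\wopt\Vert \ge \Vert\pr^\perp(\wopt)\Vert/\Vert A^L\Vert$.
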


%%%%%%%%%%%%%%%%%%%%%%%%
Proofs of Proposition~\ref{prop:teaching-risk-suboptimality} and Theorem~\ref{ineq:guarantee-bounded-suboptimality} are given in the Appendix. 
%Proofs of Proposition~\ref{prop:teaching-risk-suboptimality} and Theorem~\ref{ineq:guarantee-bounded-suboptimality} are given in the extended version of this paper \cite{haug2018teachingrisk}. 
%%%%%%%%%%%%%%%%%%%%%%%%
Note that the expression on the right hand side of
% \eqref{ineq:guarantee-bounded-suboptimality} 
the inequality in Theorem \ref{thm:guarantee-bounded-suboptimality}
tends to $0$ as $\rho(A^L; \wopt) \to 0$, provided $\Vert A^L \Vert$
is bounded. Theorem \ref{thm:guarantee-bounded-suboptimality}
therefore implies that, if $\rho(A^L; \wopt)$ is small, a truly
optimal policy $\pi^*$ is near-optimal for \emph{some} choice of
reward function linear in the features $L$ observes, namely, the
reward function $s \mapsto \langle \wopt_L, \feat(s) \rangle$ with
$\wopt_L \in \R^\ell$ the vector whose existence is claimed by the
theorem.
%Therefore, such a policy can be approximately found by $L$ under our assumption's on her learning abilities.

% More generally, the larger $\rho(A^L; \wopt)$ is, the harder is for
% $L$ to find a policy that looks optimal to her \emph{and} whose
% feature expectations as seen by her approximate those of the teacher's
% policy.

% \luis{\sout{We are aware that policies do not necessarily have to be
%         optimal to be learnable using common IRL algorithms. However,
%         in order to approximate a suboptimal policy, the learner
%         typically has to mix several optimal ones. We therefore argue
%         that it is useful to consider conditions under which
%         demonstrated policies look optimal to the learner, thus
%         circumventing the need to use mixed policies.}}

%%% Local Variables:
%%% mode: latex
%%% TeX-master: "main"
%%% End:

%%%%%%%%%%%%%%%%%%%%%%%%%%%%%%%%%%%%%%%%%%%%%%%%%%%%%%%%%% Algorithms
%!TEX root = main.tex
%%%%%%%%%%%%%%%%%%%%%%%%%%%%%%%%%%%%%%%%%%%%%%%%%%%%%%%%%
\section{Teaching}
\label{sec.algorithms}
% \subsection{Teaching features}
% \label{sec:constr-teach-mech}

\begin{algorithm}[t]
    \caption{\textsc{TRGreedy}: Feature- and demo-based
        teaching with TR-greedy feature selection}
    \label{algo:feature-and-demo-based-teaching}
    \begin{algorithmic}%[1]
        \Require Reward vector $\w^*$, set of teachable features
        $\cF$, feature budget $B$, initial worldview $A^L$, teacher
        policy $\pi^T$, initial learner policy $\pi^L$, performance
        threshold $\varepsilon$.
        
        \For{$i = 1,\dots, B$}

        \If{% $\rho( A^L;\w ^*) > \varepsilon_0$ \textbf{or}
            $\vert \langle \wopt, \mu(\pi^L) \rangle - \langle
            \wopt, \mu(\pi^T) \rangle \vert > \varepsilon$}
        \State
        $f \leftarrow \argmin_{f \in \cF} \rho(A^L \oplus
        \langle f, \cdot \rangle; \w^*)$ \Comment{$T$ selects
            feature to teach}

        \State
        $A^L \leftarrow A^L \oplus \langle f, \cdot \rangle$
        \Comment{$L$'s worldview gets updated}

        \State
        $\pi^L \leftarrow \textsc{Learning}(\pi^L,
        A^L\mu(\pi^T))$ \Comment{$L$ trains a new policy}%
        \Else \State \Return $\pi^L$%
    
        \EndIf{}%
        \EndFor{} \State \Return $\pi^L$
    \end{algorithmic}
\end{algorithm}

\paragraph{Feature teaching.} The discussion in the last section shows
that, under our assumptions on how $L$ learns, a teaching scheme in
which $T$ solely provides demonstrations to $L$ can generally, i.e.,
without any assumption on the teaching risk, not lead to reasonable
guarantees on the learner's performance with respect to the true
reward. A natural strategy is to introduce additional teaching
operations by which the teacher can update $L$'s worldview $A^L$ and
thereby decrease the teaching risk.

The simplest way by which the teacher $T$ can change $L$'s worldview is by
informing her about features $f \in \R^k$ that are relevant to
performing well in the task, thus causing her to update her worldview
$A^L\colon \R^k \to \R^{\ell}$ to
\begin{equation*}
    A^L \oplus \langle f, \cdot \rangle \colon \R^k \to \R^{\ell + 1}.
\end{equation*}
Viewing $A^L$ as a matrix, this operation appends $f$ as a row to
$A^L$. (Strictly speaking, the feature that is thus provided is
$s \mapsto \langle f, \feat(s) \rangle$; we identify this map with the
vector $f$ in the following and thus keep calling $f$ a ``feature''.)

This operation has simple interpretations in the settings we are
interested in: If $L$ is a human learner, ``teaching a feature'' could
mean making $L$ aware that a certain quantity, which she might not
have taken into account so far, is crucial to achieving high
performance. If $L$ is a machine, such as an autonomous car or a
robot, it could mean installing an additional sensor.

% [\textbf{IDEA} Could write e.g.\ $\mathrm{Adj}(A^L; f)$ for the set of
% all matrices having $\ker = \ker A^L \cap \ker f$ and assume that the
% update is a nice matrix in that set. Could e.g.\ assume that
% $A^L \vert_{(\ker A^L)^\perp}$ is an isometry (in particular,
% $\sigma(A^L) = 1$), and that new features are adjoined in a way that
% preserves this property; mostly for simplicity of exposition]

% \subsection{Teachable features}
% \label{sec:constraints}
\paragraph{Teachable features.} Note that if $T$ could provide
arbitrary vectors $f \in \R^k$ as new features, she could always, no
matter what $A^L$ is, decrease the teaching risk to zero in a single
teaching step by choosing $f = \wopt$, which amounts to telling $L$
the true reward function. We assume that this is not possible, and
that instead only the elements of a fixed finite set of
\emph{teachable features}
\begin{equation*}
    \cF = \{f_i ~|~ i \in I\} \subset \R^k
\end{equation*}
can be taught. In real-world applications, such constraints could come
from the limited availability of sensors and their costs; in the case
that $L$ is a human, they could reflect the requirement
that features need to be interpretable, i.e., that they can only be
simple combinations of basic observable quantities.

% \subsection{Basic algorithm for known $A^L$}
% \label{sec:basic-algorithm}

\paragraph{Greedy minimization of teaching risk.}
Our basic teaching algorithm \textsc{TRGreedy} (Algorithm
\ref{algo:feature-and-demo-based-teaching}) works as follows: $T$ and
$L$ interact in rounds, in each of which $T$ provides $L$ with the
feature $f \in \cF$ which reduces the teaching risk of $L$'s worldview
with respect to $\wopt$ by the largest amount. $L$ then trains a
policy $\pi^L$ with the goal of imitating her current view
$A^L \mu(\pi^T)$ of the feature expectations of the teacher's policy;
the \textsc{Learning} algorithm she uses could be the apprenticeship
learning algorithm from \cite{Abbeel-Ng--ICML04}.

\paragraph{Computation of the teaching risk.} The computation of the
teaching risk required of $T$ in every round of Algorithm
\ref{algo:feature-and-demo-based-teaching} can be performed as
follows: One first computes the orthogonal complement of
$\ker A^L \cap \ker \wopt$ in $\R^k$ and intersects that with
$\ker A^L$, thus obtaining (generically) a 1-dimensional subspace
$\lambda = (\ker A^L \cap \ker \wopt)^\perp \cap \ker A^L$ of $\R^k$;
this can be done using SVD. The teaching risk is then
$\rho(A^L; \wopt) = \langle\wopt, v^\perp\rangle$ with $v^\perp$ the
unique unit vector in $\lambda$ with
$\langle \wopt, v^\perp \rangle > 0$.

% \begin{lemma}
%     \label{prop:characterization-v_perp}
%     Assume that $\ker A^L \not \subseteq \ker \wopt$. Then the
%     function $\{v \in \ker A^L ~|~ \Vert v \Vert = 1\} \to [0,1]$,
%     $v \mapsto \langle \wopt, v\rangle$ has a unique maximum point
%     $v^\perp \equiv v^\perp(A^L; \wopt)$ which is uniquely
%     characterized by the following properties: (i)
%     $v^\perp \perp (\ker A^L \cap \ker \wopt)$ and (ii)
%     $\langle \wopt, v^\perp \rangle > 0$.
% \end{lemma}

%%% Local Variables:
%%% mode: latex
%%% TeX-master: "main"
%%% End:

%%%%%%%%%%%%%%%%%%%%%%%%%%%%%%%%%%%%%%%%%%%%%%%%%%%%%%%%%% Experiments
%!TEX root = main.tex
%%%%%%%%%%%%%%%%%%%%%%%%%%%%%%%%%%%%%%%%%%%%%%%%%%%%%%%%%
\section{Experiments}\label{sec:experiments}
%%%%%%%%%%%%%%%%%%%%%%%%%%%%%%%%%%%%%%%%%%%%%%%

\begin{figure}[t]    
    \begin{subfigure}[t]{0.329\textwidth}
        \includegraphics[width=\textwidth]{./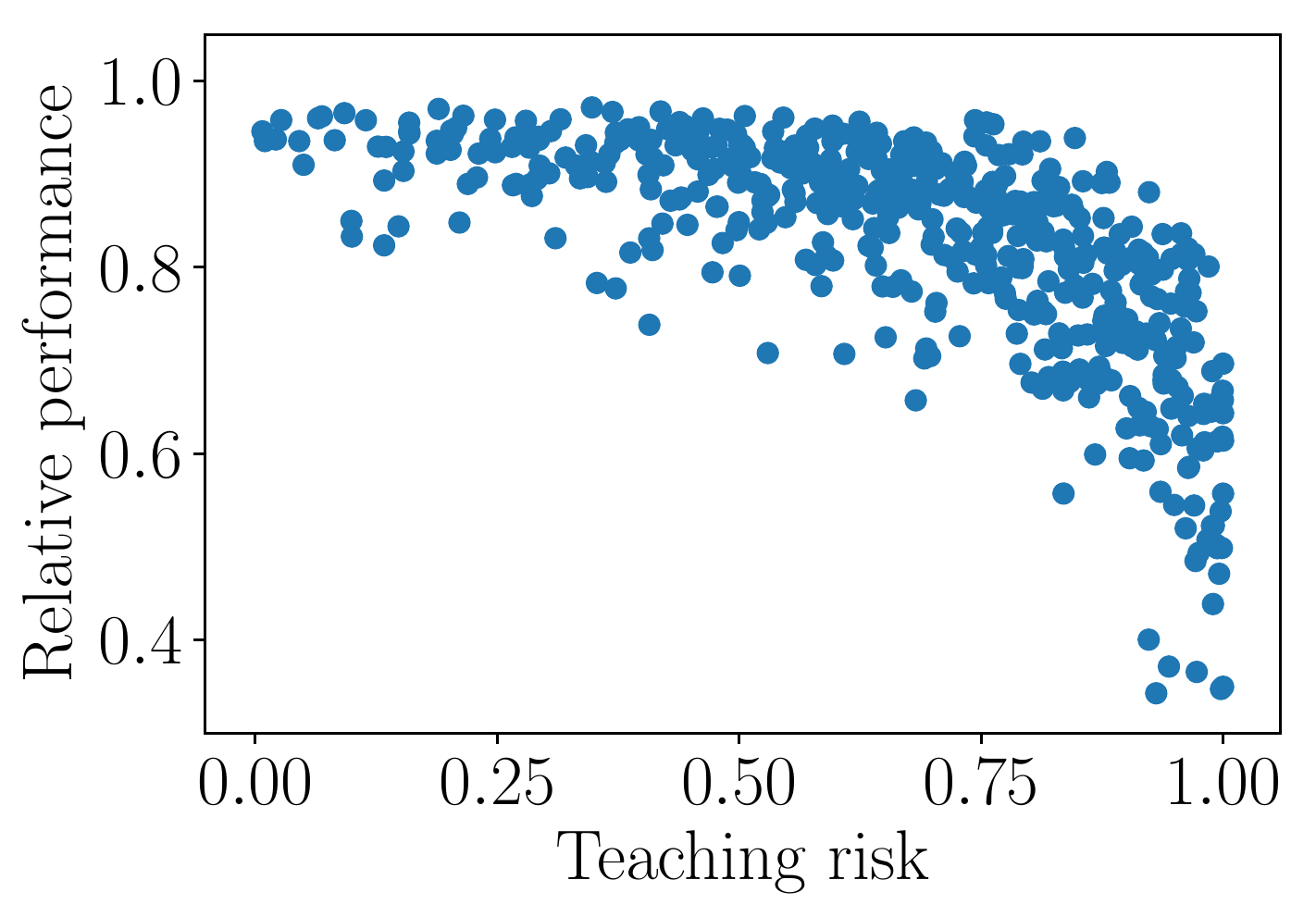}
        \subcaption{$\gamma = 0.5$}
        \label{fig:tr-vs-perf.a}
    \end{subfigure}
    \begin{subfigure}[t]{0.329\textwidth}
        \includegraphics[width=\textwidth]{./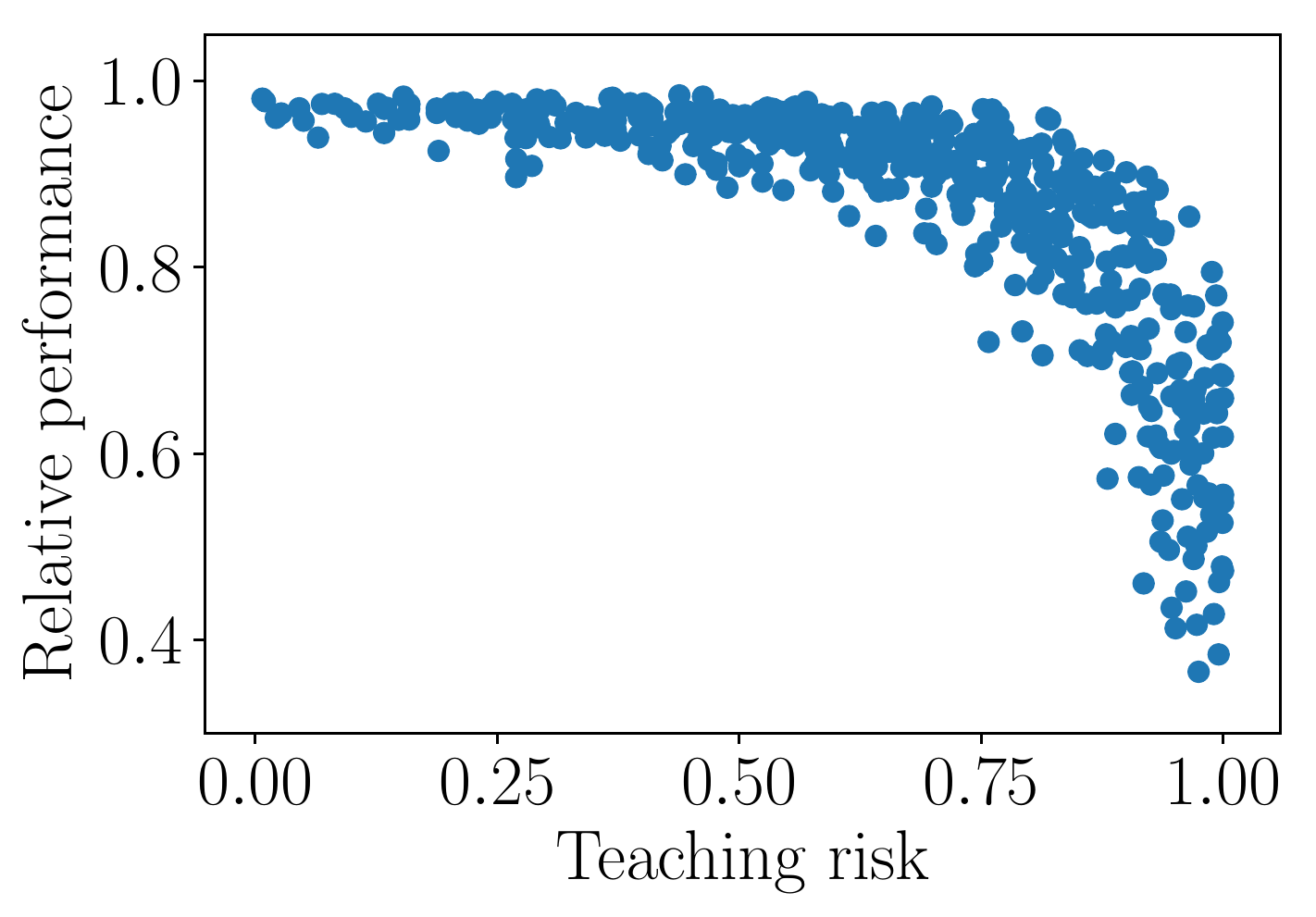}
        \subcaption{$\gamma = 0.75$}
        \label{fig:tr-vs-perf.b}
    \end{subfigure}
    \begin{subfigure}[t]{0.329\textwidth}
        \includegraphics[width=\textwidth]{./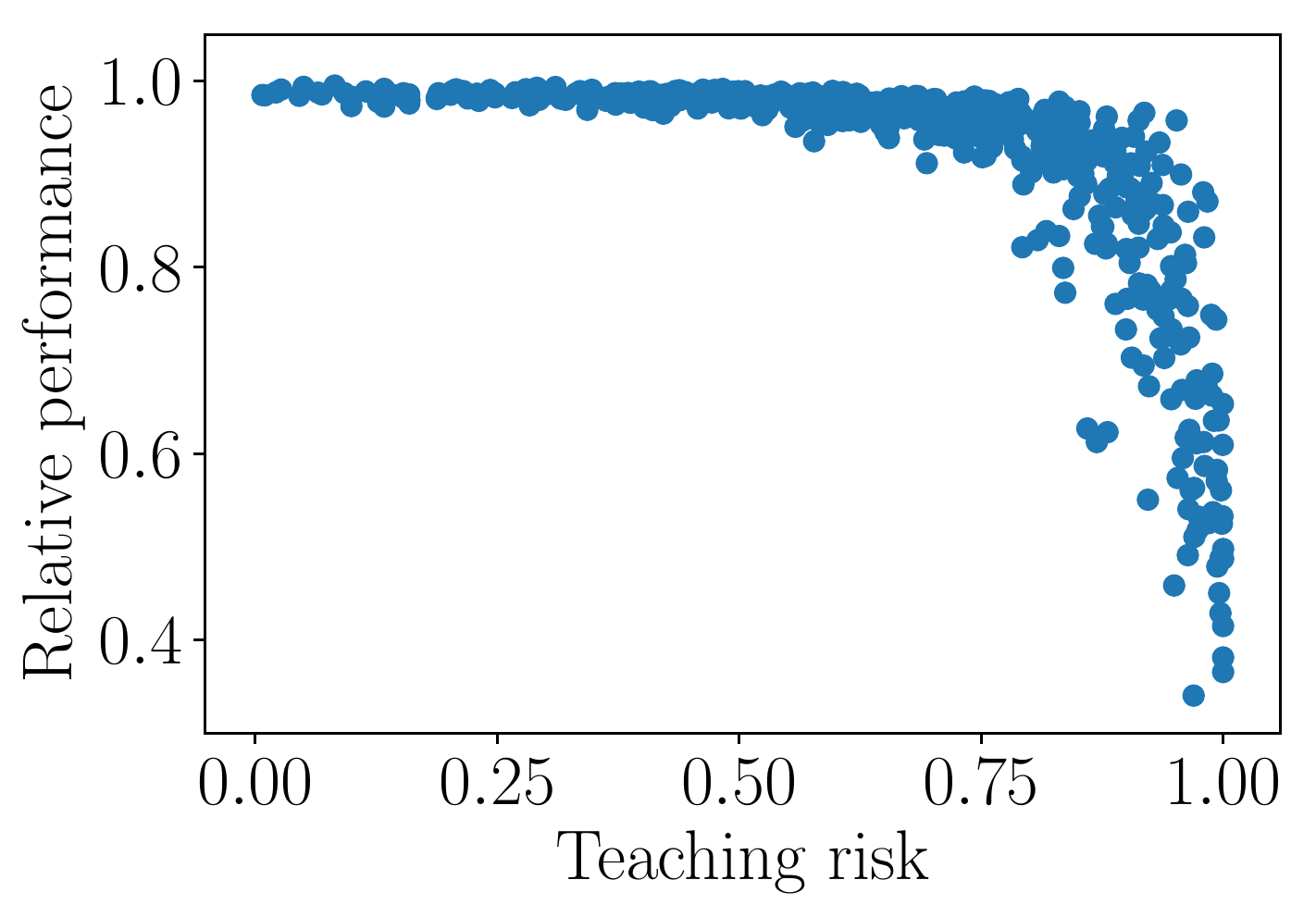}
        \subcaption{$\gamma = 0.9$}
        \label{fig:tr-vs-perf.c}
    \end{subfigure}
    \caption{Performance vs.\ teaching risk. Each point in the plots
        shows the relative performance that a learner $L$ with a
        random worldview matrix $A^L$ achieved after one round of
        learning and the teaching risk of $A^L$. For all plots, a
        gridworld with $N=20$, $n=2$ was used. The reward vector
        $\wopt$ was sampled randomly in each round.
        (\subref{fig:tr-vs-perf.a})--(\subref{fig:tr-vs-perf.c})
        correspond to different values of the discount factor
        $\gamma$.}
    \label{fig:tr-vs-perf}
\end{figure}

Our experimental setup is similar to the one in
\cite{Abbeel-Ng--ICML04}, i.e., we use $N \times N$ gridworlds in
which non-overlapping square regions of neighbouring cells are grouped
together to form $n \times n$ macrocells for some $n$ dividing
$N$. The state set $S$ is the set of gridpoints, the action set is
$A = \{\leftarrow, \rightarrow, \uparrow, \downarrow\}$, and the
feature map $\feat\colon S \to \R^k$ maps a gridpoint belonging to
macrocell $i \in \{1, \dots, (N/n)^2\}$ to the one-hot vector
$e_i \in \R^k$; the dimension of the ``true'' feature space is
therefore $k = (N/n)^2$. Note that these gridworlds satisfy the quite
special property that for states $s \neq s'$, we either have
$\feat(s) = \feat(s')$ (if $s, s'$ belong to the same macrocell), or
$\feat(s) \perp \feat(s')$. The reward weights $\wopt \in \R^k$ are
sampled randomly for all experiments unless mentioned otherwise. As
the \textsc{Learning} algorithm within
Algorithm~\ref{algo:feature-and-demo-based-teaching}, we use the
\emph{projection version} of the apprenticeship learning algorithm
from \cite{Abbeel-Ng--ICML04}.

\paragraph{Performance vs.\ teaching risk.}
The plots in Figure \ref{fig:tr-vs-perf} illustrate the significance
of the teaching risk for the problem of teaching a learner under
worldview mismatch. To obtain these plots, we used a gridworld with
$N=20$, $n=2$; for each value $\ell \in [1, 100]$, we sampled five
random worldview matrices $A^L \in \R^{\ell \times 100}$, and let $L$
train a policy $\pi^L$ using the projection algorithm in
\cite{Abbeel-Ng--ICML04}, with the goal of matching the feature
expectations $\mu(\pi^T)$ corresponding to an optimal policy $\pi^T$
for a reward vector $\wopt$ that was sampled randomly in each round. Each
point in the plots corresponds to one such experiment and shows the
relative performance of $\pi_L$ after the training round vs.\ the
teaching risk of $L$'s worldview matrix $A^L$.

All plots in Figure \ref{fig:tr-vs-perf} show that the variance of the
learner's performance decreases as the teaching risk decreases. This
supports our interpretation of the teaching risk as a measure of the
potential gap between the performances of $\pi^L$ and $\pi^T$ when $L$
matches the feature expectations of $\pi^T$ in her worldview. The
plots also show that the bound for this gap provided in Theorem
\ref{thm:guarantee-bounded-risk} is overly conservative in general,
given that $L$'s performance is often high and has small variance even
if the teaching risk is relatively large.

The plots indicate that for larger $\gamma$ (i.e., less discounting),
it is easier for $L$ to achieve high performance even if the teaching
risk is large. This makes intuitive sense: If there is a lot of
discounting, it is important to reach high reward states quickly in
order to perform well, which necessitates being able to recognize
where these states are located, which in turn requires the teaching
risk to be small. If there is little discounting, it is sufficient to
know the location of \emph{some} maybe distant reward state, and hence
even a learner with a very deficient worldview (i.e., high teaching
risk) can do well in that case.

\begin{figure}[t]
    \centering
    \begin{subfigure}[t]{0.329\textwidth}
        \includegraphics[height=3.5cm]{./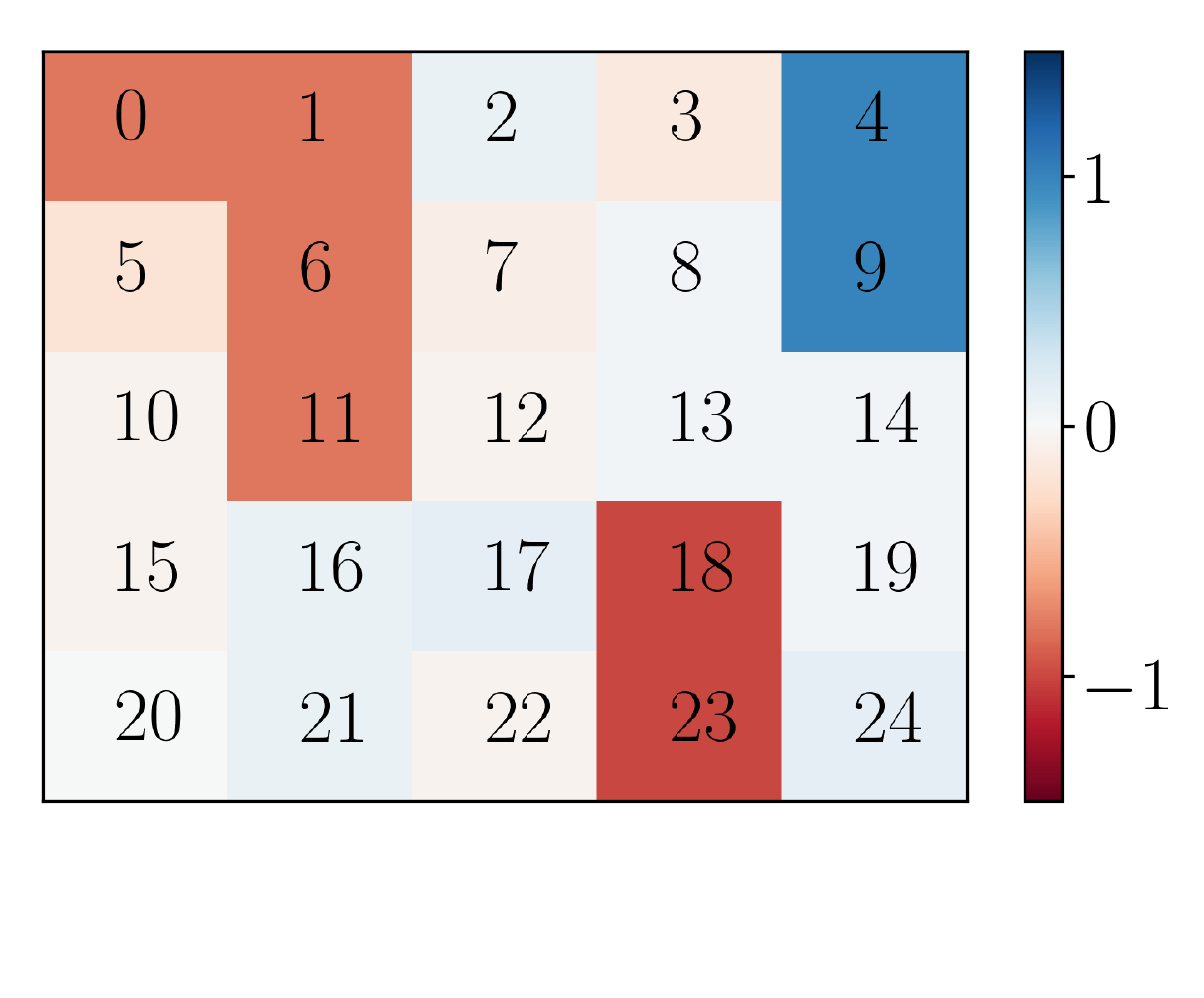}
        \subcaption{}
        \label{fig:gridworld-interpretable}
    \end{subfigure}
    \hfill
    \begin{subfigure}[t]{0.329\textwidth}
        \includegraphics[height=3.5cm]{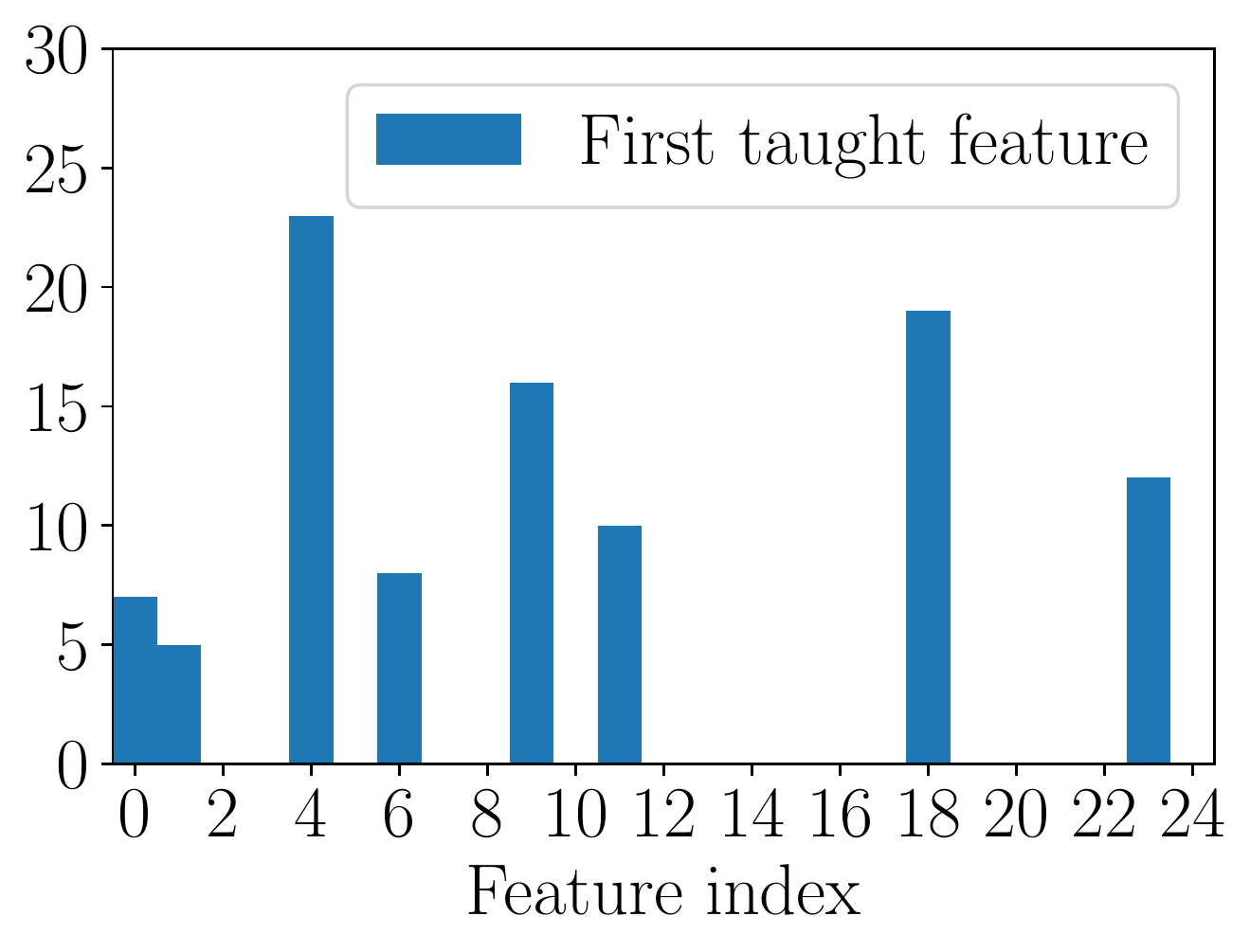}
        \subcaption{}
        \label{fig:histogram1}
    \end{subfigure}
    \begin{subfigure}[t]{0.329\textwidth}
        \includegraphics[height=3.5cm]{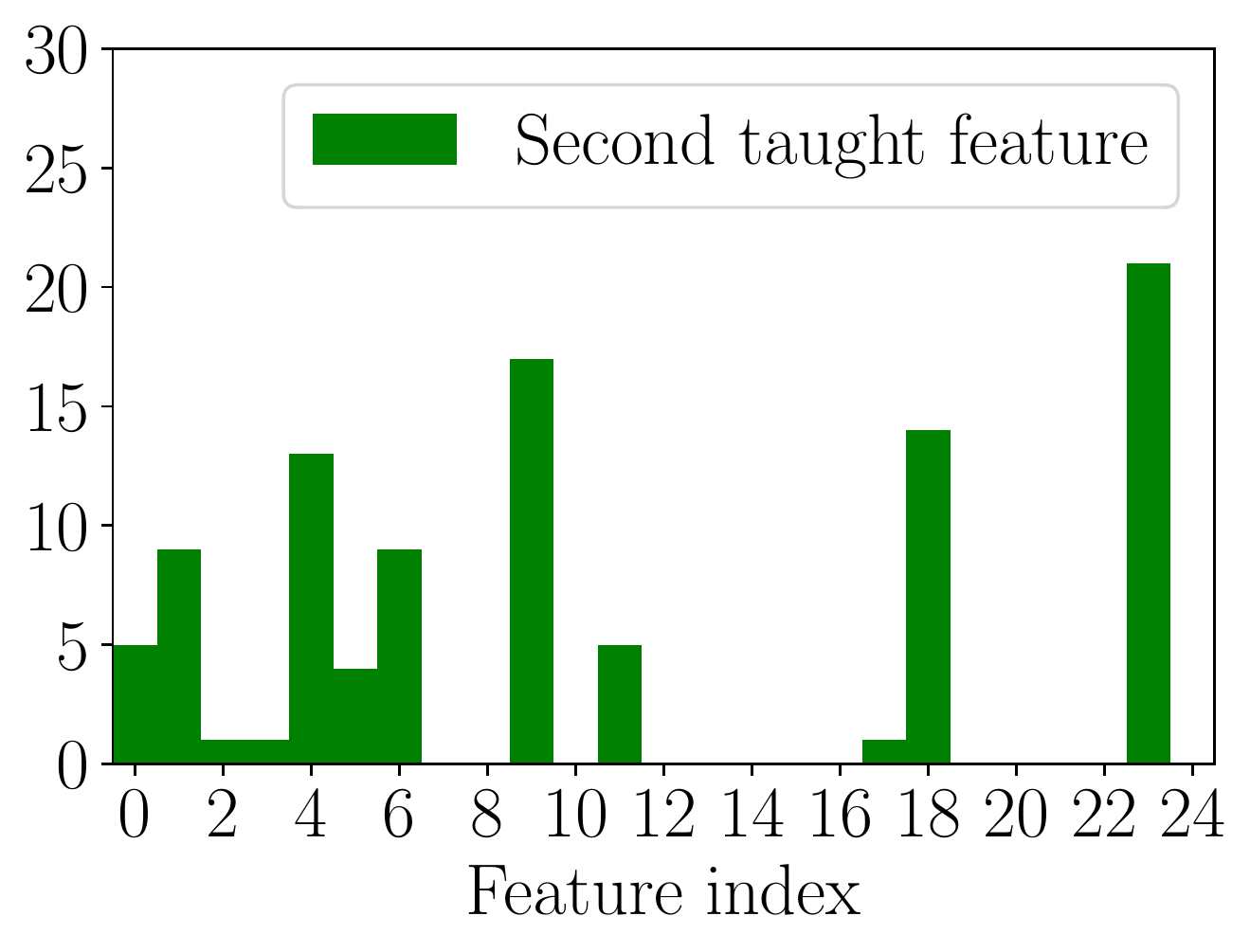}
        \subcaption{}
        \label{fig:histogram2}
    \end{subfigure}
    \caption{Gridworld with $N=10,
        n=2$. The colors in (\subref{fig:gridworld-interpretable}) indicate
        the reward of the corresponding macrocell, with blue meaning
        positive and red meaning negative reward. The numbers within
        each macrocell correspond to the feature index. The histograms
        in (\subref{fig:histogram1}) and (\subref{fig:histogram2})
        show how often, in a series of 100 experiments, each feature
        was selected as the first resp.\ second feature to be taught
        to a learner with a random 5-dimensional initial worldview.}
    \label{fig:histograms}
\end{figure}

\paragraph{Small gridworlds with high reward states and obstacles.}
We tested \textsc{TRGreedy} (Algorithm
\ref{algo:feature-and-demo-based-teaching}) on gridworlds such as the
one in Figure \ref{fig:gridworld-interpretable}, with a small number
of states with high positive rewards, some obstacle states with high
negative rewards, and all other states having rewards close to
zero. The histograms in Figures \ref{fig:histogram1} and
\ref{fig:histogram2} show how often each of the features was selected
by the algorithm as the first resp.\ second feature to be taught to
the learner in 100 experiments, in each of which the learner was
initialized with a random 5-dimensional worldview. In most cases, the
algorithm first selected the features corresponding to one of the high
reward cells 4 and 9 or to one of the obstacle cells 18 and 23, which
are clearly those that the learner must be most aware of in order to
achieve high performance.

\paragraph{Comparison of algorithms.} We compared the performance of
\textsc{TRGreedy} (Algorithm
\ref{algo:feature-and-demo-based-teaching}) to two variants of the
algorithm which are different only in how the features to be taught
in each round are selected: The first variant, \textsc{Random},
simply selects a random feature $f \in \cF$ from the set of all
teachable features. The second variant, \textsc{PerfGreedy}, 
greedily selects the feature that will lead to the best performance
in the next round among all $f \in \cF$ (computed by simulating the 
teaching process for each feature and evaluating the corresponding learner).

The plots in Figure \ref{fig:experimental-comparison} show, for each
of the three algorithms, the relative performance with respect to the
true reward function $s \mapsto \langle \wopt, \feat(s) \rangle$ that
the learner achieved after each round of feature teaching and training
a policy $\pi^L$, as well as the corresponding teaching risks and
runtimes, plotted over the number of features taught. The relative
performance of the learner's policy $\pi^L$ was computed as
$(R(\pi^L) - \min_{\pi}R(\pi))/(\max_{\pi} R(\pi) -
\min_{\pi}R(\pi))$.

% The difference between the settings corresponding to Figure
% \ref{fig:experimental-comparison-a} resp.\
% \ref{fig:experimental-comparison-a} was the discount factor used,
% which in the first case was $\gamma = 0.9$, and in the second case
% was $\gamma = 0.7$. Note that, although the teaching risk curves are
% almost identical, the relative performance achieved by the learner
% is higher in the first case, which resonates with the intuition that
% having an accurate view of the features relevant to the reward
% function gets more important as there is more
% discounting. \luis{Might get rid of this paragraph}

% In Figure \ref{fig:perf-tr-a}, all lines end at the same point as
% the set of features taught at the very end is the same for all. In
% Figure \ref{fig:perf-tr-b}, the teaching risk decreases faster than
% in Figure \ref{fig:perf-tr-a} due to the higher number of available
% features.

We observed in all our experiments that \textsc{TRGreedy} performed
significantly better than \textsc{Random}. While the comparison
between \textsc{TRGreedy} and \textsc{PerfGreedy} was slightly in
favour of the latter, one should note that a teacher $T$ running
\textsc{PerfGreedy} must simulate a learning round of $L$ for all
features $f \in \mathcal F$ not yet taught, which presupposes that $T$
knows $L$'s learning algorithm, and which also leads to very high
runtime. If $T$ only knows that $L$ is able to match (her view of) the
feature expectations of $T$'s demonstrations and $T$ simulates $L$
using \emph{some} algorithm capable of this, there is no guarantee
that $L$ will perform as well as $T$'s simulated counterpart, as there
may be a large discrepancy between the true performances of two
policies which in $L$'s view have the same feature expectations. In
contrast, \textsc{TRGreedy} relies on much less information, namely
the kernel of $A^L$, and in particular is agnostic to the precise
learning algorithm that $L$ uses to approximate feature counts.

\begin{figure}[t]     
    \begin{subfigure}[]{0.329\textwidth}
        \includegraphics[width=\textwidth]{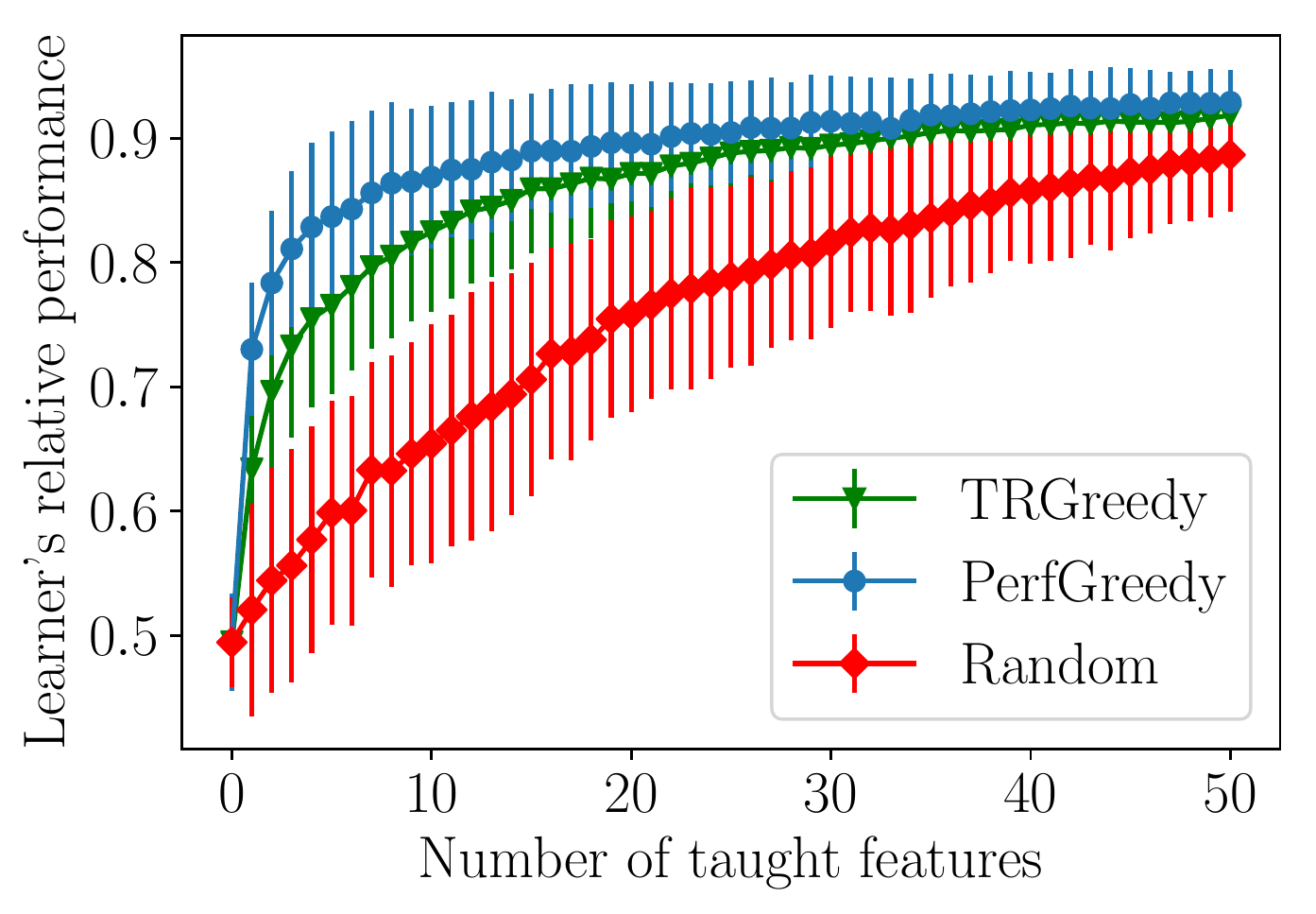}
        \subcaption{Relative performance}
        \label{fig:experimental-comparison-a}
    \end{subfigure}
    \begin{subfigure}[]{0.329\textwidth}
        \includegraphics[width=\textwidth]{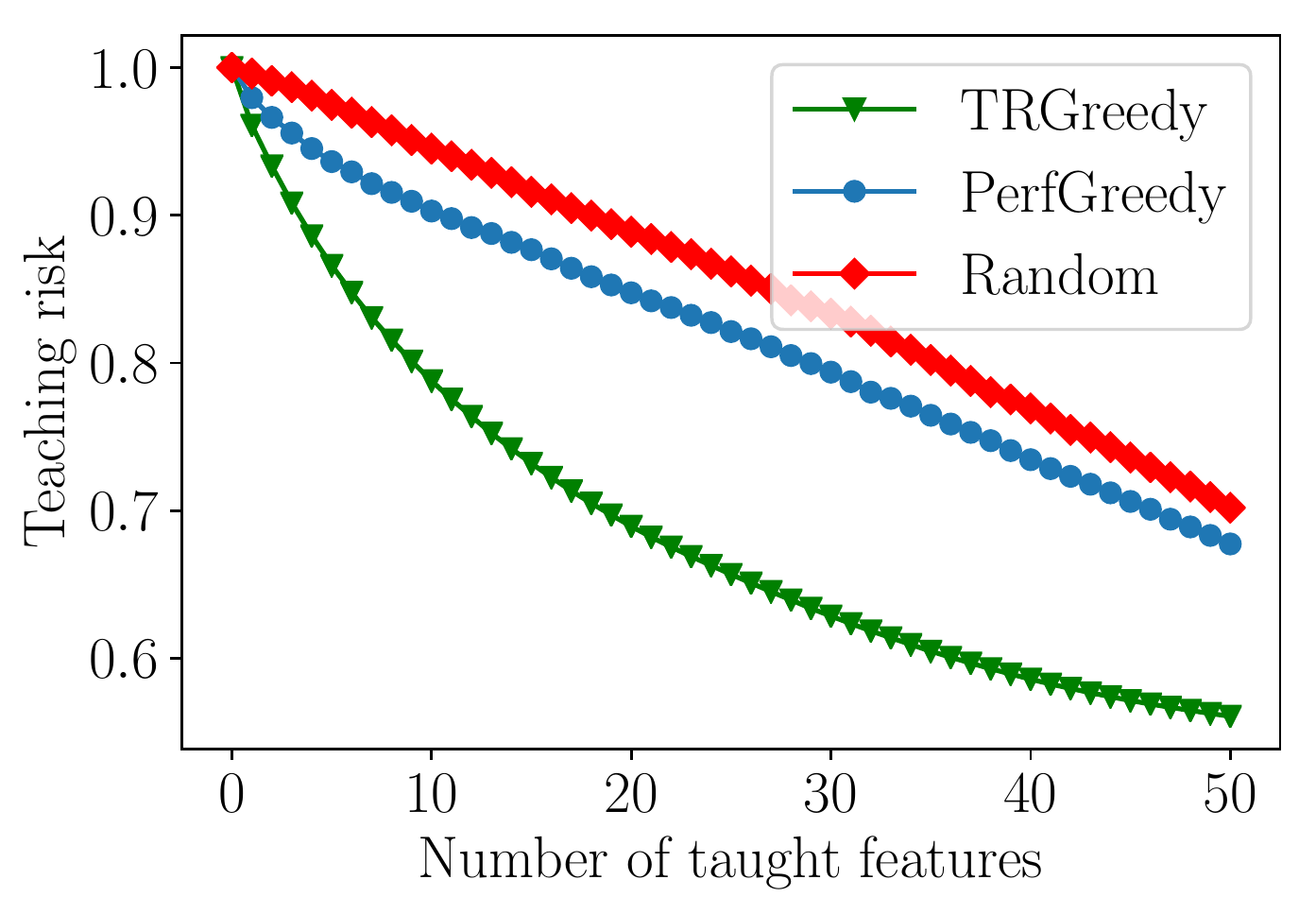}
        \subcaption{Teaching risk}
        \label{fig:experimental-comparison-b}
    \end{subfigure}
    \begin{subfigure}[]{0.329\textwidth}
        \includegraphics[width=\textwidth]{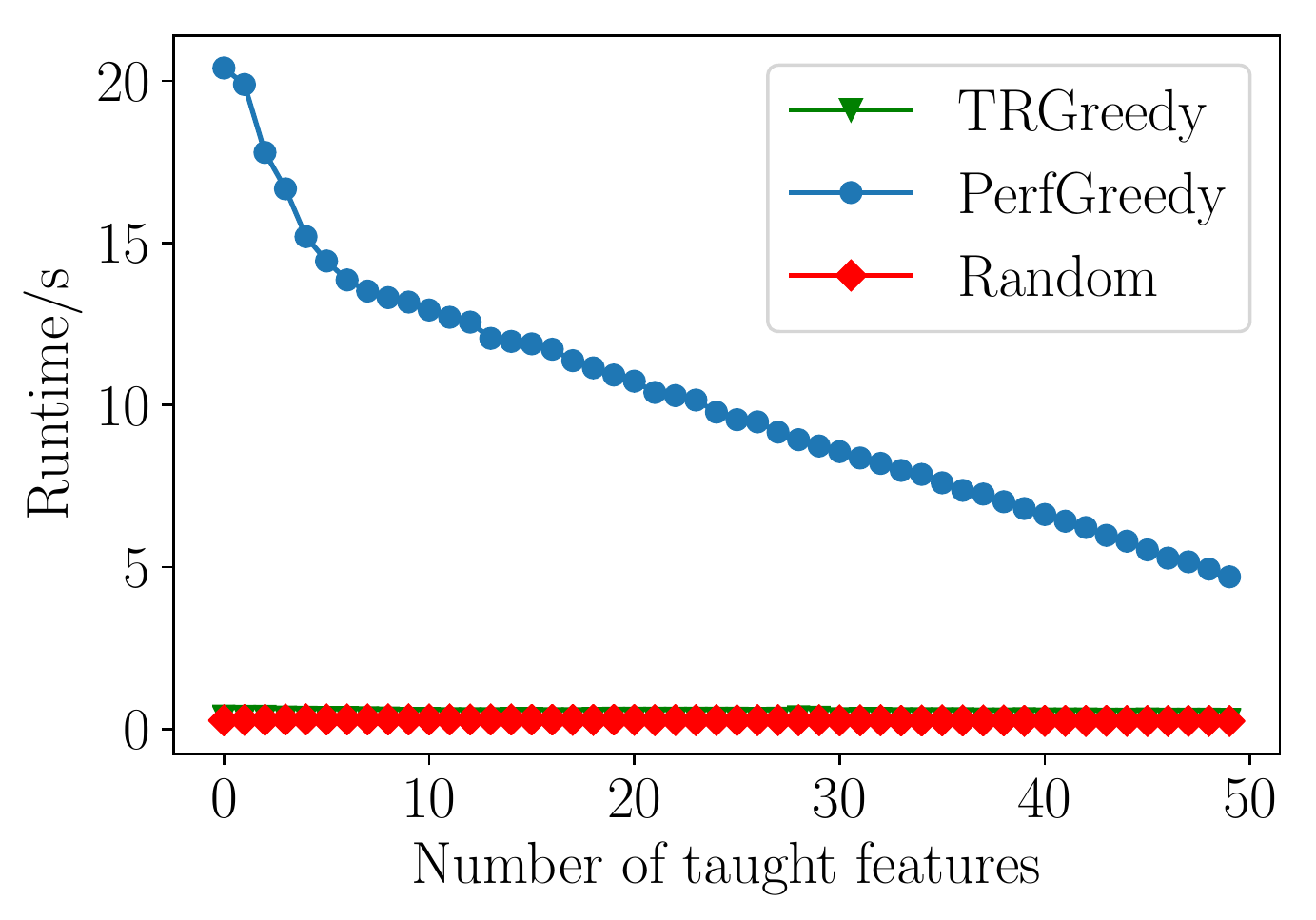}
        \subcaption{Runtime}
        \label{fig:experimental-comparison-c}
    \end{subfigure}
    \caption{Comparison of \textsc{TRGreedy} vs.\ \textsc{PerfGreedy}
        vs.\ \textsc{Random}. The plots show
        (\subref{fig:experimental-comparison-a}) the relative
        performance that the learner achieved after each round of
        feature teaching and training a policy,
        (\subref{fig:experimental-comparison-b}) the teaching risk
        after each such step, and
        (\subref{fig:experimental-comparison-c}) the runtime required
        to perform each step. We averaged over 100 experiments, in
        each of which a new random gridworld of size
        $(N, n) = (20, 2)$ and a new set $\cF$ of randomly selected features with
        $\vert \cF \vert = 70$ were sampled; the bars in the relative
        performance plot indicate the standard deviations. The
        discount factor used was $\gamma = 0.9$ in all cases.}
        % (top row) resp.\ $(N, n) = (30, 2)$ (bottom row)
    \label{fig:experimental-comparison}
\end{figure}

%%% Local Variables:
%%% mode: latex
%%% TeX-master: "main"
%%% End:

%%%%%%%%%%%%%%%%%%%%%%%%%%%%%%%%%%%%%%%%%%%%%%%%%%%%%%%%%% Conclusions
%!TEX root = main.tex

%%%%%%%%%%%%%%%%%%%%%%%%%%%%%%%%%%%%%%%%%%%%%%%%%%%%%%%%%
\section{Conclusions and Outlook}\label{sec:conclusions}
%%%%%%%%%%%%%%%%%%%%%%%%%%%%%%%%%%%%%%%%%%%%%%% 
We presented an approach to dealing with the problem of
\emph{worldview mismatch} in situations in which a learner attempts to
find a policy matching the feature counts of a teacher's
demonstrations. We introduced the \emph{teaching risk}, a quantity
that depends on the worldview of the learner and the true reward
function and which (1) measures the degree to which policies which are
optimal from the point of view of the learner can be suboptimal from
the point of view of the teacher, and (2) is an obstruction for truly
optimal policies to look optimal to the learner. We showed that under
the condition that the teaching risk is small, a learner matching
feature counts using, e.g., standard IRL-based methods is guaranteed to
learn a near-optimal policy from demonstrations of the teacher even
under worldview mismatch.

Based on these findings, we presented our teaching algorithm
\textsc{TRGreedy}, in which the teacher updates the learner's
worldview by teaching her features which are relevant for the true
reward function in a way that greedily minimizes the teaching risk,
and then provides her with demostrations based on which she learns a
policy using any suitable algorithm. We tested our algorithm in
gridworld settings and compared it to other ways of selecting features
to be taught. Experimentally, we found that \textsc{TRGreedy}
performed comparably to a variant which selected features based on
greedily maximizing performance, and consistently better than a
variant with randomly selected features.

We plan to investigate extensions of our ideas to nonlinear settings
and to test them in more complex environments in future work. We hope
that, ultimately, such extensions will be applicable in real-world
scenarios, for example in systems in which human expert knowledge is
represented as a reward function, and where the goal is to teach this
expert knowledge to human learners.

%%% Local Variables:
%%% mode: latex
%%% TeX-master: "main"
%%% End:

%%%%%%%%%%%%%%%%%%%%%%%%%%%%%%%%%%%%%%%%%%%%%%%%%%%%%%%%% BIB
\bibliography{references}
\bibliographystyle{apalike}

%%%%%%%%%%%%%%%%%%%%%%%%%%%%%%%%%%%%%%%%%%%%%%%%%%%%%%%%% APPENDIX
\clearpage
\onecolumn
\appendix
{\allowdisplaybreaks
    %!TEX root = main.tex
%%%%%%%%%%%%%%%%%%%%%%%%%%%%%%%%%%%%%%%%%%%%%%%%%%%%%%%%%
% !TEX root = main.tex
%%%%%%%%%%%%%%%%%%%%%%%%%%%%%%%%%%%%%%%%%%%%%%%%%%%%%%%%% 
\section{Proof of Theorem \ref{thm:guarantee-bounded-risk}}

\begin{proof}[Proof of Theorem \ref{thm:guarantee-bounded-risk}]
    Denote by $\pr: \R^k \to \ker A^L$ the orthogonal projection onto
    $\ker A^L$ and let $v = \pr(\mu(\pi^L) - \mu(\pi^T))$. Note that
    we have $v + \mu(\pi^T) - \mu(\pi^L) \in (\ker A^L)^\perp$ and
    $\Vert v \Vert \leq \Vert \mu(\pi^T) - \mu(\pi^L) \Vert \leq
    \diam \mu(\Pi)$. It follows that
    % We can always find a vector $v \in \ker A^L$ such that
    % $\mu(\pi^T) + v - \mu(\pi^L) \in (\ker A^L)^\perp$ and
    % $\Vert v \Vert \leq \diam \mu(\Pi)$. Given such a $v$, we have
    \begin{align*}
      \Vert v + \mu(\pi^T) - \mu(\pi^L) \Vert &\leq \frac{1}{\sigma(A^L)} \Vert
                                                A^L(v + \mu(\pi^T) - \mu(\pi^L))
                                                \Vert \\
                                              &= \frac{1}{\sigma(A^L)} \Vert
                                                A^L(\mu(\pi^T) - \mu(\pi^L)) \Vert \\
                                              &< \frac{\varepsilon}{\sigma(A^L)},
    \end{align*}
    using the definition of $\sigma(A^L)$, the fact that $v \in \ker
    A^L$, and the assumption that $\Vert A^L(\mu(\pi^T) -
    \mu(\pi^L))\Vert < \varepsilon$. We then obtain
    \begin{align*}
      \vert \langle \wopt, \mu(\pi^T) \rangle - \langle \wopt, \mu(\pi^L)
      \rangle \vert &\leq \vert \langle \wopt, v + \mu(\pi^T) - \mu(\pi^L)
                      \rangle \vert + \vert \langle \wopt, v \rangle
                      \vert\\
                    &\leq \Vert v + \mu(\pi^T) - \mu(\pi^L) \Vert + \Vert v
                      \Vert \cdot \rho(A^L; \wopt)\\
                    &\leq \frac{\varepsilon}{\sigma(A^L)} +
                      \diam\mu(\Pi) \cdot \rho(A^L; \wopt)
    \end{align*}
    using the triangle inequality, the Cauchy-Schwarz inequality and
    the definition of $\rho(A^L; \wopt)$, and the estimates above.
\end{proof}
%%% Local Variables:
%%% mode: latex
%%% TeX-master: "main"
%%% End:

%!TEX root = main.tex
%%%%%%%%%%%%%%%%%%%%%%%%%%%%%%%%%%%%%%%%%%%%%%%%%%%%%%%%%
\section{Proof of Proposition \ref{prop:teaching-risk-suboptimality}}

\begin{figure}[h]
    \centering
    \includegraphics[scale=0.8]{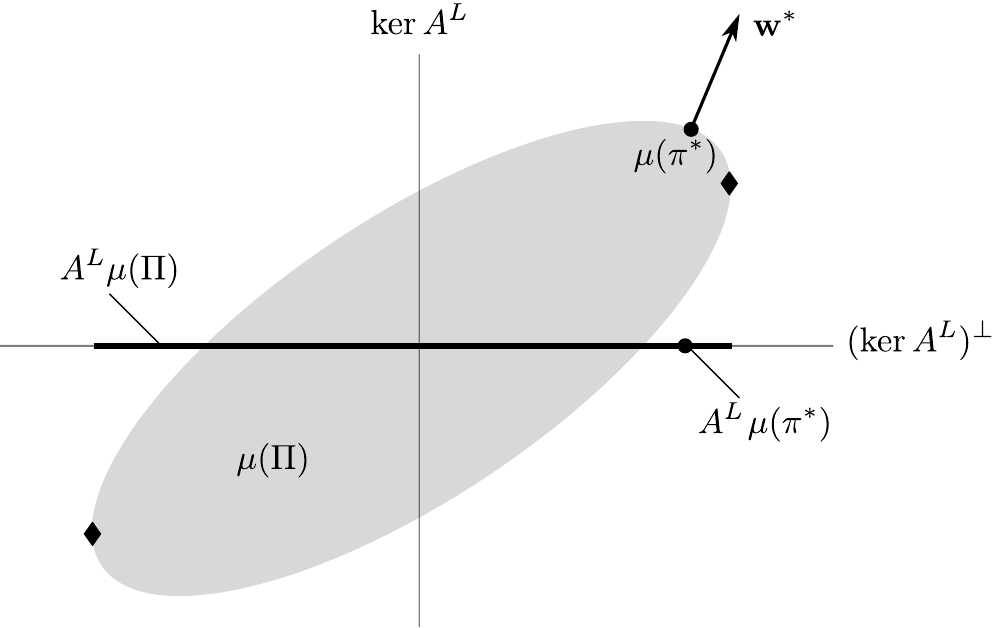}
    \caption{A situation in which $\rho(A^L; \wopt) > 0$: Here, $A^L$ is the
        projection on the horizontal axis. The points in
        $\partial \mu(\Pi)$ which $A^L$ maps to the boundary of
        $A^L\mu(\Pi)$, and which therefore appear optimal to $L$, are
        the two points marked by
        \raisebox{0.1em}{$\scriptstyle \blacklozenge$}, at which the
        normal vector to $\mu(\Pi)$ is contained in
        $(\ker A^L)^\perp$; these are precisely the points
        which are optimal for some $\wopt$ with $\rho(A^L; \wopt) = 0$
        (namely, $\wopt = (\pm 1, 0)$). All other points in
        $\partial \mu(\Pi)$ get mapped by $A^L$ to the interior of
        $A^L\mu(\Pi)$ and therefore appear suboptimal for any choice
        of reward function $L$ might consider.}
    \label{fig:suboptimality-pos-risk}
\end{figure}

\begin{proof}[Proof of Proposition
    \ref{prop:teaching-risk-suboptimality}]

    As mentioned in the main text, we assume that the set
    $\mu(\Pi) \subset \R^k$ is the closure of a bounded open set and
    has a smooth boundary $\partial \mu(\Pi)$.

    Finding the feature expectations $\mu(\pi^*)$ of a policy $\pi^*$
    which is optimal with respect to
    $s \mapsto \langle \wopt, \feat(s) \rangle$ is equivalent to
    maximizing the linear map $\mu \mapsto \langle \wopt, \mu \rangle$
    over $\mu(\Pi) \subset \R^k$, whose gradient is $\wopt \neq 0$. It
    follows that these feature expectations need to satisfy the
    following conditions:
    \begin{enumerate}
    \item $\mu(\pi^*)$ lies on the boundary $\partial \mu(\Pi)$,
    \item $\wopt$ is normal to $\partial \mu(\Pi)$ at $\mu(\pi^*)$.
    \end{enumerate}
    The second condition is equivalent to saying that the tangent
    space to $\mu(\Pi)$ at $\mu(\pi^*)$ is $\ker \wopt$.

    Assume now that $\rho(A^L; \wopt) > 0$. This is equivalent to saying
    that $\ker A^L \not \subseteq \ker \wopt$, i.e., to saying that $\ker
    A^L$ is not tangent to $\partial \mu(\Pi)$ at $\mu(\pi^*)$. That
    implies that there exist some $v \in \ker A^L$ such that $\mu(\pi^*) +
    v$ is contained in the interior of $\mu(\Pi)$, which means that a
    sufficiently small ball around $\mu(\pi^*) + v$ is contained in
    $\mu(\Pi)$. In particular, a small ball around $\mu(\pi^*) + v$ in the
    affine space $\mu(\pi^*) + v + (\ker A_L)^\perp$ is entirely contained
    in $\mu(\Pi)$. This implies that $A^L(\mu(\pi^*)) = A^L(\mu(\pi^* +
    v))$ is contained in the interior of $A^L \mu(\Pi)$, i.e., not in the
    boundary $\partial A^L(\mu(\pi^*))$. Therefore $\pi^*$ is suboptimal
    with respect to any choice of reward function $s \mapsto \langle \w,
    A^L \feat(s) \rangle$ with $\w \in \R^\ell$.
\end{proof}

%%% Local Variables:
%%% mode: latex
%%% TeX-master: "main"
%%% End:

%%%%%%%%%%%%%%%%%%%%%%%%%%%%%%%%%%%%%%%%%%%%%%%%%%%%%%%%% 
\section{Proof of Theorem \ref{thm:guarantee-bounded-suboptimality}}

\begin{proof}[Proof of Theorem \ref{thm:guarantee-bounded-suboptimality}]
    The assumption that $\pi^*$ is optimal for the reward
    function $s \mapsto \langle \wopt, \feat(s) \rangle$ implies that
    $\langle \wopt, \mu - \mu(\pi^*) \rangle \leq 0$ for all
    $\mu \in \mu(\Pi)$. By decomposing $\wopt$ as
    $\wopt = \pr(\wopt) + \pr^\perp(\wopt)$, where
    $\pr: \R^k \to \ker A^L$ denotes the orthogonal projection onto
    $\ker A^L$ and $\pr^\perp: \R^k \to (\ker A^L)^\perp$ the
    orthogonal projection onto $(\ker A^L)^\perp$, we obtain
    \begin{equation}
        \label{eq:est1}
        \langle \pr(\wopt), \mu - \mu(\pi^*) \rangle + \langle
        \pr^\perp(\wopt), \mu - \mu(\pi^*) \rangle \leq 0.
    \end{equation}
    The first summand can be bounded as follows:
    \begin{align}
      \label{eq:est2}
      \begin{split}
          \langle \pr(\wopt), \mu - \mu(\pi^*) \rangle &\geq - \Vert
          \mu - \mu(\pi^*) \Vert \cdot \Vert
          \pr(\wopt) \Vert \\
          &= -\Vert \mu - \mu(\pi^*) \Vert \cdot \rho(A^L; \wopt)\\
          &\geq -\diam \mu(\Pi) \cdot \rho(A^L; \wopt),
      \end{split}
    \end{align}
    using the Cauchy-Schwarz inequality and the fact that
    $\Vert \pr(\wopt) \Vert = \rho(A^L; \wopt)$. By combining
    estimates \eqref{eq:est1} and \eqref{eq:est2}, we obtain
    \begin{equation}
        \label{eq:ineq}
        \langle \pr^\perp(\wopt), \mu - \mu(\pi^*) \rangle \leq \diam
        \mu(\Pi) \cdot \rho(A^L; \wopt).
    \end{equation}

    Denote now by $(A^L)^+$ the Moore-Penrose pseudoinverse of $A^L$,
    and by $X: =( (A^L)^+)^T$ its transpose. We have
    \begin{align}
      \label{eq:ident1}
      \begin{split}
          \langle \pr^\perp(\wopt), \mu - \mu(\pi^*) \rangle &=
          \langle \wopt, \pr^\perp(\mu -
          \mu(\pi^*) \rangle\\
          &= \langle \wopt, (A^L)^+A^L\pr^\perp (\mu - \mu(\pi^*)
          \rangle\\
          &= \langle X \wopt, A^L\pr^\perp(\mu
          - \mu(\pi^*)\rangle\\
          &= \langle X\wopt, A^L (\mu - \mu(\pi^*)\rangle,
      \end{split}
    \end{align}
    where the second equality uses the fact that the restriction of
    $(A^L)^+ A^L$ to $(\ker A^L)^\perp$ is the identity (in fact,
    $(A^L)^+A^L = \pr^\perp$, a general property of Moore-Penrose
    pseudoinverses). Setting
    $\wopt_L := \frac{1}{\Vert X \wopt \Vert}{X\wopt}$ and combining
    inequality \eqref{eq:ineq} with \eqref{eq:ident1}, we obtain
    \begin{equation}
        \label{eq:est3}
        \left \langle \wopt_L, A^L
            (\mu - \mu(\pi^*))\right\rangle \leq \frac{\diam \mu(\Pi) \cdot
            \rho(A^L; \wopt)}{\Vert X \wopt \Vert}.
    \end{equation}

    We now estimate the term $\Vert X \wopt \Vert$:
    \begin{align}
      \label{eq:est4}
      \begin{split}
          \Vert X \wopt \Vert &= \max_{v \in \R^\ell, \Vert v \Vert =
              1}
          \langle X\wopt, v \rangle\\
          &= \max_{v \in \R^\ell, \Vert v \Vert = 1}
          \langle \wopt, (A^L)^+ v \rangle\\
          &\geq \frac{\langle \wopt, (A^L)^+ A^L \pr^\perp(\wopt)\rangle}{
              \Vert A^L
              \pr^\perp(\wopt) \Vert} \\
          &= \frac{\Vert \pr^\perp(\wopt) \Vert^2}{\Vert
              A^L \pr^\perp(\wopt) \Vert}\\
          &= \frac{\Vert \pr^\perp(\wopt) \Vert}{\Vert A^L
              \frac{\pr^\perp(\wopt)}{\Vert
                  \pr^\perp(\wopt)\Vert}\Vert}\\
          &\geq \frac{\Vert \pr^\perp(\wopt) \Vert}{\Vert A^L\Vert}.
      \end{split}
    \end{align}
    % with
    % $\sigma(A^L) = \min_{v \in (\ker A^L)^\perp, \Vert v \Vert = 1}
    % \Vert A^L v \Vert$.
    Since
    $\Vert \pr^\perp(\wopt) \Vert = \sqrt{\Vert \wopt \Vert^2 - \Vert
        \pr(\wopt) \Vert^2} = \sqrt{1 - \rho(A^L; \wopt)^2}$,
    combining \eqref{eq:est3} and \eqref{eq:est4} yields
    \begin{equation*}
        \langle \wopt_L, A^L(\mu - \mu(\pi^*)) \rangle \leq \frac{\diam
            \mu(\Pi) \cdot \Vert A^L\Vert \cdot \rho(A^L; \wopt)}{\sqrt{1 - \rho(A^L; \wopt)^2}}
    \end{equation*}
    This holds for all $\mu \in \mu(\Pi)$, and hence we can maximize
    over $\mu$ to obtain the statement claimed in Theorem
    \ref{thm:guarantee-bounded-suboptimality}.
\end{proof}

}
%%%%%%%%%%%%%%%%%%%%%%%%%%%%%%%%%%%%%%%%%%%%%%%%%%%%%%%%% END
\end{document}